\theoremstyle{plain}
\newtheorem{theorem}{Theorem}[section]
\newtheorem{lemma}[theorem]{Lemma}
\theoremstyle{definition}
\theoremstyle{remark}
\newcommand{\vs}{\bm{s}}
\newcommand{\vx}{\bm{x}}
\newcommand{\vy}{\bm{y}}
\newcommand{\p}{\partial}
\newcommand{\gauss}{\mathcal{N}}
\newcommand{\tr}{\text{tr}}
\newcommand{\Order}{\mathcal{O}}
\newcommand{\Real}{\mathbb{R}}
\newcommand{\llv}{\left\lVert}
\newcommand{\rrv}{\right\rVert}
\newcommand{\lla}{\left\langle}
\newcommand{\rra}{\right\rangle}
\icmltitlerunning{Optimal Task Order for Continual Learning of Multiple Tasks}
\begin{document}

\twocolumn[
\icmltitle{Optimal Task Order for Continual Learning of Multiple Tasks}




\begin{icmlauthorlist}
\icmlauthor{Ziyan Li}{phys}
\icmlauthor{Naoki Hiratani}{neuro}
\end{icmlauthorlist}

\icmlaffiliation{neuro}{Department of Neuroscience, Washington University in St Louis, St Louis, USA}
\icmlaffiliation{phys}{Department of Physics, Washington University in St Louis, St Louis, USA}

\icmlcorrespondingauthor{Naoki Hiratani}{hiratani@wustl.edu}

\icmlkeywords{Continual Learning, Curriculum Learning}

\vskip 0.3in
]



\printAffiliationsAndNotice{}  

\begin{abstract}
Continual learning of multiple tasks remains a major challenge for neural networks. Here, we investigate how task order influences continual learning and propose a strategy for optimizing it. Leveraging a linear teacher-student model with latent factors, we derive an analytical expression relating task similarity and ordering to learning performance. Our analysis reveals two principles that hold under a wide parameter range: (1) tasks should be arranged from the least representative to the most typical, and (2) adjacent tasks should be dissimilar. We validate these rules on both synthetic data and real-world image classification datasets (Fashion-MNIST, CIFAR-10, CIFAR-100), demonstrating consistent performance improvements in both multilayer perceptrons and convolutional neural networks. Our work thus presents a generalizable framework for task-order optimization in task-incremental continual learning.
\end{abstract}


\section{Introduction}
The ability to learn multiple tasks continuously is a hallmark of general intelligence. However, deep neural networks and its applications, including large language models, struggle with continual learning and often suffer from catastrophic forgetting of previously acquired knowledge \citep{mccloskey1989catastrophic, french1999catastrophic, hadsell2020embracing, luo2023empirical}. Although extensive work has been done to identify when forgetting is most prevalent \citep{ramasesh2020anatomy, lee2021continual} and how to mitigate it \citep{french1991using, robins1995catastrophic, kirkpatrick2017overcoming, shin2017continual, serra2018overcoming, rolnick2019experience}, it remains unclear how to prevent forgetting while simultaneously promoting knowledge transfer across tasks \citep{ke2020continual, lin2022beyond, ke2023sub, kontogianni2024continual}.

One important yet relatively underexplored aspect of continual learning is task-order dependence. Previous work has revealed that the order in which tasks are presented can significantly influence continual learning performance and also explored various approaches to optimize task order \citep{lad2009toward, pentina2015curriculum, guo2018dynamic, bell2022effect, lin2023theory, singh2023learning}. However, we still lack clear understanding on how ordering of tasks influences the learning performance and how to order tasks to achieve optimal performance. 
Figure~\ref{order_intro} illustrates this problem using a continual binary image classification example, where a neural network is trained on three tasks: \textit{A} (Cat vs. Ship), \textit{B} (Frog vs. Truck), and \textit{C} (Horse vs. Deer). Learning one task can influence performance on the others in a complex manner (Figs. \ref{order_intro}a and \ref{order_intro}b). Consequently, in this example, the \textit{C}\textrightarrow \textit{B}\textrightarrow \textit{A} task order achieves a higher average performance after training than the \textit{A}\textrightarrow \textit{B}\textrightarrow \textit{C} order (Fig. \ref{order_intro}c). 
The goal of this work is to understand this task-order dependence in continual learning.

\begin{figure*}[tb]
\begin{center}
\centerline{\includegraphics[width=0.75\linewidth]{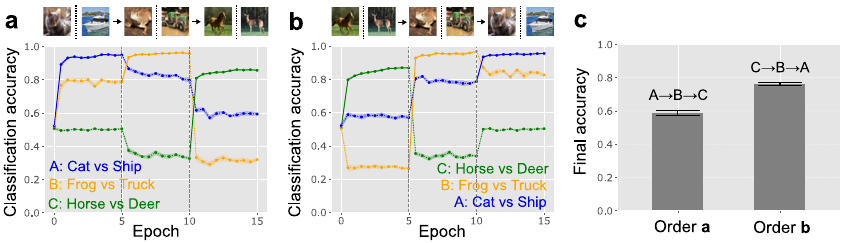}}
\vskip -0.1in
\caption{Schematic figure of the task-order dependence. 
\textbf{a, b)} Continual learning of binary classification with two different task orders.
\textbf{c)} Average test accuracy on the three classification tasks at the end of learning under task orders depicted in panels \textbf{a} and \textbf{b}. 
Error bars represent the standard error of mean over 10 random seeds.  
}
\label{order_intro}
\end{center}
\vskip -0.2in
\end{figure*}

Task-order optimization requires some amount of knowledge on all tasks beforehand, making it infeasible in a strictly online learning setting. Nevertheless, it remains highly relevant for many learning problems. 
One scenario is when data acquisition and training need to be conducted in parallel, which may occur in the training of self-driving algorithms \citep{verwimp2023clad} or medical image analysis \citep{kumari2023continual}. In this setting, it is beneficial to first collect a small pilot dataset across all underlying tasks and then determine the optimal order of data acquisition and training to maximize knowledge transfer while minimizing forgetting across tasks. To demonstrate the potential applicability of our theory to this problem, we provide numerical evidence that, in continual visual recognition benchmarks, an optimal task order estimated from just 1\% of the data significantly outperforms a random task order.

Moreover, in robotics applications \citep{lesort2020continual, ibarz2021train}, switching between tasks often involves physically rearranging objects around the robot, which is both time-consuming and labor-intensive. As a result, switching tasks on a trial-by-trial basis is often infeasible, necessitating block-wise training. In this scenario, optimizing task order could help maximize average performance across all tasks. Similar constraints arise in designing of machine-learning-based teaching curricula for schools or professional training where learners need to study multiple subjects sequentially \citep{rafferty2016faster, singh2023learning}. 
Furthermore, even in a more traditional continual learning task, if the current task creates unfavorable conditions, systems can postpone its learning to a more suitable time. Understanding how the order of tasks impacts learning can also serve as a tool for predicting the difficulty of online learning given a data stream.
Lastly, large language models are typically trained in an online fashion because the size of the training corpus is so vast that multiple epochs of training over the entire dataset is infeasible \citep{hoffmann2022training, chowdhery2023palm}. In such cases, how the corpus is organized for training can significantly impact learning efficiency, supporting the importance of optimizing task/corpus order.


To explore the basic principle of task order optimization, here we analyze the task-order dependence of continual learning using a linear teacher-student model with latent factors. First, we derive an analytical expression for the average error after continual learning as a function of task similarity for an arbitrary number of tasks. Our theory shows that this error inevitably depends on the task order because it is a function of the upper-triangular components of the task similarity matrix, rather than of the entire matrix.

We then investigate how the similarity between tasks, when placed in various positions within the task order, affects the overall error. Through linear perturbation analysis, we find that the task-order effect decomposes into two factors. The first is absolute order dependence: similarity between two tasks influences the error differently depending on whether these tasks appear near the beginning or near the end of the sequence. We demonstrate that when tasks are on average positively correlated, the least representative tasks should be learned first, while the most typical task should be learned last (periphery-to-core rule).
The second factor is relative order dependence: the effect of task similarity on the error differs depending on whether two tasks are adjacent in the sequence or far apart. We show that a task order maximizing the path length in the task dissimilarity graph outperforms one that minimizes this path length (max-path rule), consistent with previous empirical observations \citep{bell2022effect}. 

We illustrate these two rules by applying them to tasks with simple similarity structures forming chain, ring, and tree graphs, revealing the presence of non-trivial task orders that robustly achieve the optimal learning performance, given a graph structure.
Moreover, we apply these rules to continual image classification tasks using the Fashion-MNIST, CIFAR-10, and CIFAR-100 datasets. We estimate task similarity by measuring the zero-shot transfer performance between tasks, and then implement the task-ordering rules based on these estimates. Our results show that both the periphery-to-core rule and the max-path rule hold robustly in both multilayer perceptrons and convolutional neural networks. Moreover, using $\sim1\%$ of the data for the task similarity and order estimation was sufficient to achieve a significant improvement over random ordering. This work thus provides a simple and generalizable theory to task-order optimization in task-incremental continual learning.

\section{Related Work}
The effects of curriculum learning have been extensively studied in the reinforcement learning (RL) literature \citep{elman1993learning, krueger2009flexible, narvekar2020curriculum}. However, these studies primarily focus on learning a single challenging task by sequentially training on simpler tasks, leaving open the question of how to design a curriculum for learning multiple tasks of similar difficulty. A limited number of works have explored task-order optimization for continual/lifelong learning across multiple tasks by contrast.

\citet{lad2009toward} demonstrated that ordering tasks based on pairwise order preferences can lead to better classification performance compared to random task ordering. More recently, \citet{bell2022effect} investigated task-order optimization by examining Hamiltonian paths on a task dissimilarity graph (see Sec. \ref{sec_task_order_optim} for details). They hypothesized that the shortest Hamiltonian path would be optimal but instead found that the longest Hamiltonian path significantly outperformed both random task ordering and the shortest path in continual image classification tasks. Our work provides analytical insights into when and why this is the case.
\citet{lin2023theory} analyzed generalization error and task-order optimization in continual learning for linear regression. Our work advances this theoretical framework in several important ways. First, we introduce a latent structure model for considering the effect of input similarity and reveal how tasks' relative positions—not just their absolute positions as in \citet{lin2023theory}'s Equation 10—influence the model's final performance. We validate this theoretical finding through experiments on both synthetic data and image classification tasks. Furthermore, we extend beyond the synthetic task settings of \citet{lin2023theory} by demonstrating these effects in a general continual learning framework using data-driven similarity estimation.
Task-order effects on continual learning have also been analyzed in \citep{pentina2015curriculum, evron2023continual, singh2023learning}.

\begin{figure*}[tb]
\begin{center}
\centerline{\includegraphics[width=0.8\linewidth]{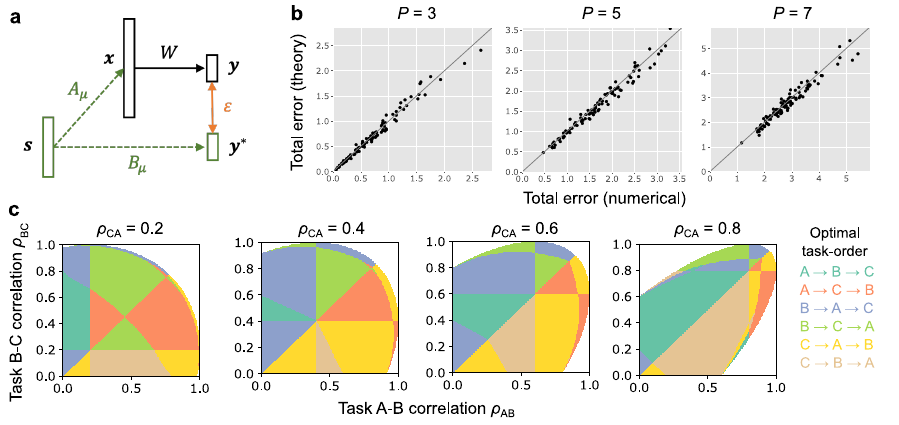}}
\vskip -0.1in
\caption{
\textbf{a)} Schematic of the teacher-student model. 
\textbf{b)} Comparison between the analytical and numerical evaluations of the error $\epsilon_f$ under various number of tasks. Each point represents the errors under a randomly sampled task similarity matrices $(C^{in}, C^{out})$ (see Appendix \ref{sec_implementation_details} for implementation details).
\textbf{c)} Optimal task order for three task learning. In the white regions, $C^{in}$ is not a positive-definite matrix, hence the tasks are not well-defined. 
}
\label{lin_theory_simul}
\end{center}
\vskip -0.2in
\end{figure*}

The linear teacher-student model used in this work is a widely adopted framework for analyzing the average properties of neural networks by explicitly modeling the data generation process through a teacher model \citep{gardner1989three, zdeborova2016statistical, bahri2020statistical}. Due to their analytical tractability, these models have offered deep insights into various aspects of statistical learning problems, including generalization \citep{seung1992statistical, advani2020high}, learning dynamics \citep{saad1995line, werfel2003learning, saxe2014exact}, and representation learning \citep{saxe2019mathematical, tian2021understanding}. Many studies have also applied this framework to explore various aspects of continual learning \citep{asanuma2021statistical, lee2021continual, evron2022catastrophic, goldfarb2023analysis, li2023statistical, lin2023theory, evron2024joint, hiratani2024disentangling, mori2024optimal}.

\section{Task-order Dependence}
\subsection{Model Setting}
Let us consider a sequence of $P$ tasks, where the inputs $\vx \in \Real^{N_x}$ and the target output $\vy^* \in \Real^{N_y}$ of the $\mu$-th task is generated by
\begin{align}
 \vs \sim \mathcal{N} \left(0, I \right), \quad
 \vx = A_{\mu} \vs, \quad
 \vy^* = B_{\mu} \vs.
\end{align}
Here, $\vs \in \Real^{N_s}$ is the latent factor that underlies both $\vx$ and $\vy^*$, $I$ is the identity matrix, and $A_{\mu} \in \Real^{N_x \times N_s}$ and $B_{\mu} \in \Real^{N_y \times N_s}$ are the mixing matrices that generate the input $\vx$ and the target $\vy^*$ from the latent $\vs$ (Fig.~\ref{lin_theory_simul}a).
Below we focus on $N_x \gg N_s$ regime. 
The introduction of this low-dimensional latent factor $\vs$ is motivated by the presence of low-dimensional latent structures in many real-world datasets \citep{yu2017compressing, cohen2020separability}. 

We sample elements of $\{A_{\mu}, B_{\mu} \}_{\mu=1}^P$ from a correlated Gaussian distribution. 
Denoting a vector consists of the ($i,j$)-th elements of $A_1,..,A_P$ by $\bm{a}_{ij} \equiv [A_{1,ij}, A_{2,ij}, ...., A_{P,ij}]^T$, we sample $\bm{a}_{ij} \in \Real^P$ from
\begin{align}
  \bm{a}_{ij}
  \sim \gauss 
  \left( \bm{0} , \tfrac{1}{N_s}C^{in} \right)
\end{align}
where $C^{in}$ is a $P \times P$ matrices that specify input correlation between tasks. Similarly, we sample the ($i,j$)-th elements of $B_1,..,B_P$, $\bm{b}_{ij} \equiv [B_{1,ij}, B_{2,ij}, ...., B_{P,ij}]^T$, by $\bm{b}_{ij} \sim \gauss (\bm{0}, \tfrac{1}{N_s} C^{out})$. 
Note that here correlation is introduced across tasks in an element-wise manner while keeping elements of each mixing matrix independent (i.e. $\lla A^{\mu}_{ij} A^{\nu}_{kl} \rra_A = \delta_{ik} \delta_{jl} \tfrac{C^{in}_{\mu\nu}}{N_s}$), where $\delta_{ik}$ represents the Kronecker delta. 
Here we generate the model from the task similarity matrices $\{C^{in}, C^{out} \}$ because previous work suggests the crucial impact of task similarity on continual learning \citep{ramasesh2020anatomy, lee2021continual}. 
In section \ref{sec_image_classification}, we consider the estimation of task similarity from datasets to ensure the applicability of our framework. 

Let us consider the training of a linear network, $\vy = W \vx$, in this set of $P$ tasks.
We evaluate the performance of the network for the $\mu$-th task using the mean squared error:
\begin{align}
  \epsilon_{\mu} [W] 
  \equiv \lla \llv \vy^* - \vy \rrv^2 \rra_{\vs}
  = \llv B_{\mu} - W A_{\mu} \rrv_F^2,
\end{align}
where $\lla \cdot \rra_{\vs}$ represents expectation over latent $\vs \sim \gauss(\bm{0}, I)$. 
In a task-incremental continual learning task \citep{van2019three}, we are mainly concerned with minimizing the total error on all tasks after learning all tasks.
Denoting the network parameter after learning of the last, $P$-th, task by $W_P$, the final error is defined by 
\vspace{-3pt}
\begin{align}
  \epsilon_f \equiv \frac{1}{N_y} \sum_{\mu=1}^P \epsilon_{\mu} [W_P].
\end{align}
\vspace{-3pt}
Below, we take the expectation over randomly generated mixing matrices $\{ A_{\mu}, B_{\mu} \}_{\mu=1}^P$ and derive the average final error $\bar{\epsilon}_f \equiv \lla \epsilon_f \rra_{\{ A_{\mu}, B_{\mu} \}}$ as a function of the input and output correlation matrices $C^{in}$ and $C^{out}$. 
Subsequently, we analyze how the task order influences $\bar{\epsilon}_f$ and how to optimize the order. 

\subsection{Analysis of the Final Error $\epsilon_f$}
We consider task incremental continual learning where $P$ tasks are learned in sequence one by one. 
Let us denote the weight after training on the $(\mu-1)$-th task as $W_{\mu-1}$. 
Considering learning of the $\mu$-th task from $W = W_{\mu-1}$ using gradient descent on task-specific loss $\epsilon_{\mu}[W]$, the weight after training follows (see Appendix \ref{appendix_task_sim}.2)
\begin{align}
  W_{\mu} 
  = W_{\mu-1} \left( I - U_{\mu} U_{\mu}^T \right) + B_{\mu} A_{\mu}^+,
\end{align}
where $U_{\mu}$ is defined by singular value decomposition (SVD) of $A_{\mu}$, $A_{\mu} = U_{\mu} \Lambda_{\mu} V_{\mu}^T$, and $A^+$ is the pseudo-inverse of $A$. 
Applying it recursively while assuming that $W$ is initialized as a zero matrix prior to the first task, we have
\begin{align}
  W_{\mu} = \sum_{\nu=1}^{\mu} (B_{\nu} A_{\nu}^+) \prod_{\rho=\nu+1}^{\mu} (I - U_{\rho} U_{\rho}^T).
\end{align}
If $N_x \gg N_s$, pseudo-inverse $A_{\mu}^+$ is approximated by a scaled transpose $\gamma A_{\mu}^T$, and $U_{\mu} U_{\mu}^T$ approximately follows $U_{\mu} U_{\mu}^T \approx \gamma A_{\mu} A_{\mu}^T$ with $\gamma = \tfrac{N_s}{N_x}$ (see Appendix \ref{appendix_task_sim}.4). Thus, under $\tfrac{N_s}{N_x} \ll 1$, we have
\begin{align}
  W_{\mu} \approx 
  \gamma \sum_{\nu=1}^{\mu} (B_{\nu} A_{\nu}^T) \prod_{\rho=\nu+1}^{\mu} (I - \gamma A_{\rho} A_{\rho}^T).
\end{align}
Under this approximation, there exists a simple expression of the final error as below (see Appendix \ref{appendix_task_sim}.3 for the proof). 
\begin{theorem} \label{theorem_epsilonf}
At $\tfrac{N_s}{N_x} \to 0$ limit, the final error asymptotically satisfies
\begin{align} \label{eq_epsilon_f_analytical}
  \bar{\epsilon}_f
  = \llv (C^{out})^{1/2} \left( I - (I + C^{in,U})^{-1} C^{in} \right) \rrv_F^2,
\end{align}
where $C^{in,U}$ is the strictly upper-triangle matrix generated from the input correlation matrix $C^{in}$ (see eq. \ref{eq_def_CinU}). 
\end{theorem}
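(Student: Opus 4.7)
The plan is to evaluate $B_\mu - W_P A_\mu$ in the $\gamma \equiv N_s/N_x \to 0$ limit and then average the squared Frobenius norm over the output mixing matrices. Substituting the asymptotic expression for $W_P$ stated above gives
\begin{align*}
B_\mu - W_P A_\mu = B_\mu - \sum_{\nu=1}^{P} B_\nu M_{\nu\mu}, \qquad M_{\nu\mu} \equiv \gamma A_\nu^T \prod_{\rho=\nu+1}^{P}\bigl(I - \gamma A_\rho A_\rho^T\bigr) A_\mu,
\end{align*}
so the entire $A$-dependence is packaged into the $N_s \times N_s$ matrix $M_{\nu\mu}$, reducing the problem to evaluating $M_{\nu\mu}$ as $\gamma \to 0$.

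The main technical step is the limit of $M_{\nu\mu}$. Expanding $\prod_{\rho \in (\nu, P]}(I - \gamma A_\rho A_\rho^T)$ yields a signed sum over ordered chains $\nu < \rho_1 < \cdots < \rho_k \leq P$ of terms $(-1)^k \gamma^{k+1} A_\nu^T A_{\rho_1} A_{\rho_1}^T A_{\rho_2} \cdots A_{\rho_k}^T A_\mu$. The Gaussian model gives $\langle \gamma A_\alpha^T A_\beta \rangle = C^{in}_{\alpha\beta} I_{N_s}$ with entrywise fluctuations that vanish as $\gamma \to 0$, so each chain concentrates on $C^{in}_{\nu\rho_1} C^{in}_{\rho_1\rho_2} \cdots C^{in}_{\rho_k\mu}\, I_{N_s}$. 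Collecting chains gives $M_{\nu\mu} \to T_{\nu\mu} I_{N_s}$, and peeling off the first edge of each chain yields the recursion $T_{\nu\mu} = C^{in}_{\nu\mu} - \sum_{\rho > \nu} C^{in}_{\nu\rho} T_{\rho\mu}$, i.e.\ $(I + C^{in,U}) T = C^{in}$, so $T = (I + C^{in,U})^{-1} C^{in}$.

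Once $T$ is identified, the remainder is a direct calculation. Writing $M \equiv I - T$ and using $\langle B_\nu B_{\nu'}^T \rangle = C^{out}_{\nu\nu'} I_{N_y}$,
\begin{align*}
N_y \bar{\epsilon}_f
= \sum_\mu \sum_{\nu,\nu'} M_{\nu\mu} M_{\nu'\mu} \langle \tr[B_\nu B_{\nu'}^T] \rangle
= N_y \sum_{\nu,\nu'} C^{out}_{\nu\nu'} (M M^T)_{\nu\nu'}
= N_y \tr\bigl[M^T C^{out} M\bigr],
\end{align*}
which equals $N_y \| (C^{out})^{1/2}(I - (I + C^{in,U})^{-1} C^{in}) \|_F^2$, giving the theorem.

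The main obstacle will be making the concentration step rigorous: the number of chains grows combinatorially in $P$ and each chain is a product of correlated Gaussian Gram matrices, so the term-by-term replacement $\gamma A_\alpha^T A_\beta \to C^{in}_{\alpha\beta} I$ requires uniform control on joint fluctuations. I would either bound the relevant Gaussian moments via Wick contractions (showing that non-planar pairings cost additional powers of $\gamma$), or bypass the chain expansion entirely by induction on $P$ using the one-step recursion $Z_{\mu\nu} \equiv B_\nu - W_\mu A_\nu = Z_{\mu-1,\nu} - C^{in}_{\mu\nu} Z_{\mu-1,\mu}$ with $Z_{0,\nu} = B_\nu$, which follows directly from the weight-update approximation and converts the problem into a triangular linear recursion in $\mu$.
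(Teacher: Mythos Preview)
Your proposal is correct and follows essentially the same route as the paper: expand the product $\prod_\rho (I - \gamma A_\rho A_\rho^T)$ into ordered chains, replace each $\gamma A_\alpha^T A_\beta$ by $C^{in}_{\alpha\beta} I$ in the $\gamma \to 0$ limit via Gaussian moment computations, and resum the chains into $(I + C^{in,U})^{-1} C^{in}$. The paper organizes the computation in the opposite order---averaging over $B$ first and then handling the three resulting trace terms over $A$ via Wick contractions (Isserlis' theorem), and it sums the chains through the finite Neumann series $(I + C^{in,U})^{-1} = \sum_{k=0}^{P}(-1)^k (C^{in,U})^k$ rather than your recursion $(I + C^{in,U})T = C^{in}$; both identifications are equivalent since $C^{in,U}$ is nilpotent.

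One point to be careful about: because $\bar{\epsilon}_f$ is quadratic in $M_{\nu\mu}$, replacing $M_{\nu\mu}$ by $T_{\nu\mu} I_{N_s}$ before squaring requires either genuine concentration of $M_{\nu\mu}$ or, equivalently, a Wick analysis of the cross term $\langle M_{\nu\mu} \otimes M_{\nu'\mu}\rangle_A$ showing that non-leading pairings are $\mathcal{O}(\gamma)$. The paper handles this explicitly in its ``third term'' computation (their Eq.~\ref{eq_gamma_tr_ap2_2}); your proposed Wick-contraction route is exactly that, and your alternative inductive recursion $Z_{\mu,\nu} = Z_{\mu-1,\nu} - C^{in}_{\mu\nu} Z_{\mu-1,\mu}$ would sidestep the combinatorics and is arguably cleaner.
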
 
Importantly, the dependence on the upper-triangular components in Eq. \ref{theorem_epsilonf} implies that $\bar{\epsilon}_f$ is not permutation-invariance, and thus depends on the task-order. 

\subsection{Numerical Evaluation}
To check this analytical result, in Fig. \ref{lin_theory_simul}b, we compared $\bar{\epsilon}_f$ estimated from Eq. \ref{eq_epsilon_f_analytical} with its numerical estimation through learning via gradient descent, under various choices of the number of tasks $P$ and task correlation matrices $C^{in}$ and $C^{out}$ (each point in Fig. \ref{lin_theory_simul}b represents the errors under one randomly sampled $\{C^{in}, C^{out}\}$ pair). 
This result indicates that our simple analytical expression robustly captures the performance of continual learning in a linear teacher-student model under arbitrary task similarity and the number of tasks. 

To explore how task order influences the continual learning performance, we next calculated the optimal task order of three tasks under various input correlation $C^{in}$ using Eq. \ref{eq_epsilon_f_analytical} (Fig. \ref{lin_theory_simul}c). Here, we set the output correlation $C^{out}_{\mu\nu} = 1$ for all task pairs $(\mu, \nu)$ for simplicity, and parameterized the input correlation between tasks A, B, and C by
\begin{align}
  C^{in} = 
  \begin{pmatrix}
  1 & \rho_{AB} & \rho_{CA} \\
  \rho_{AB} & 1 & \rho_{BC} \\
  \rho_{CA} & \rho_{BC} & 1
  \end{pmatrix}.
\end{align}
Here, tasks A, B, and C are linear regression tasks with partial overlap in the input domain. If $\rho_{AB} = 1$, the input subspace for tasks A and B are the same, while they are independent if $\rho_{AB} = 0$.  
Figure \ref{lin_theory_simul}c revealed that the optimal task order depends on the combination of task similarity $(\rho_{AB}, \rho_{BC}, \rho_{CA})$ in a rich and complex manner. Some of the phase shifts represent trivial mirror symmetry (e.g., $x=y$ line), but many of them are non-trivial. To further gain insights into this complex task-order dependence, below, we consider the linear perturbation limit. 

\section{Task-order Optimization} \label{sec_task_order_optim}
\begin{figure*}[tb]
\begin{center}
\centerline{\includegraphics[width=1.0\linewidth]{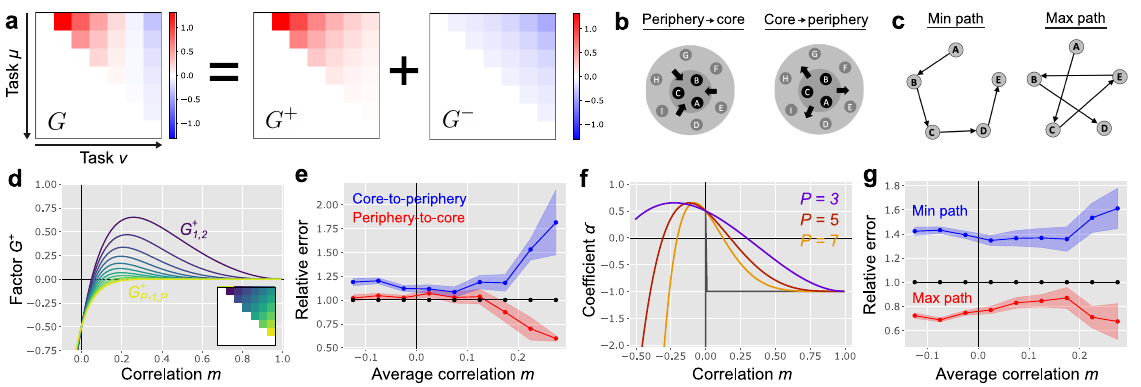}}
\vskip -0.1in
\caption{Linear perturbation analysis of task order dependence. 
\textbf{a)} Decomposition of the contribution of task similarity to the final error under $\rho_o = 1, m = 0.3$. 
\textbf{b)} Schematic representations of periphery-to-core ordering and core-to-periphery ordering. Circles A, B, ..., I represent tasks and their spatial positions represent similarity between tasks. Here, tasks A-C are central whereas tasks D-I are periphery. 
\textbf{c)} Schematic of minimum and maximum paths on a task dissimilarity graph. 
\textbf{d)} $G^{+}_{\mu\nu}$ as a function of $m$ under $P=7$. Colors indicate the indices such that the purple line on the top corresponds to $G^+_{1,2}$, while the yellow line at the bottom corresponds to $G^+_{6,7}$.
\textbf{e)} Relative error of core-to-periphery and periphery-to-core rules under various average task correlation $m$ at $P=7$. Error bars represent the standard error over random seeds (see Appendix \ref{sec_implementation_details} for details).
\textbf{f)} Coefficient $\alpha^-$ under $P=3, 5, 7$. Dark gray line corresponds to $\alpha^-$ at $P \to \infty$ limit. 
\textbf{g)} Relative error of min-path and max-path task orders under $P=7$. 
Here we took average over two task orders that follows the minimum pathway to estimate the error of min-path. The error of max-path was estimated in the same manner. 
}
\label{lin_pert}
\end{center}
\end{figure*}

\subsection{Linear Perturbation Theory}
Theorem \ref{theorem_epsilonf} revealed a simple relationship between the task similarity and the final error of continual learning, but it remains unclear how to optimize the task order for continual learning.  
To gain insight into this question, we next add a small perturbation to the input similarity matrix and examine how the change in the similarity between various task pairs modifies the error. 
We parameterize the input correlation matrix by a combination of a constant factor and a small perturbation. 
\begin{align}
  C^{in}_{\mu\nu} = 
  \begin{cases} 
  1 & (\text{if} \quad \mu = \nu) \\
  m + \delta M_{\mu\nu} & (\text{otherwise}).
  \end{cases}
\end{align}
Here, we set the constant factor $m$ to be the same across all tasks for the analytical tractability of the matrix inversion, and perturbation $\delta M$ is constrained to the ones that keep $C^{in}$ to a correlation matrix. 
Similarly, we restricted the target output correlation matrix to be $C^{out}_{\mu\nu} = \rho_o$ for all non-diagonal components.  
In this setting, the error has the following decomposition. 

\begin{theorem} \label{theorem_linpert_decompose_main}
  Let us suppose that all elements of matrix $\delta M$ satisfies, $| \delta M_{\mu\nu} | < \delta_m$, where $\delta_m$ is a positive constant.  
  Then, the final error is written as below:
  \begin{align}
    \bar{\epsilon}_f [C^{in}, C^{out} ]
    = \bar{\epsilon}_f [m, \rho_o] 
    + \sum_{\mu=1}^P \sum_{\nu = \mu+1}^P G_{\mu\nu} \delta M_{\mu\nu} + \Order(\delta_m^2),
  \end{align}
  where $\bar{\epsilon}_f [m, \rho_o] $ is the error in the absence of perturbation, and $G_{\mu\nu}$ is a function of $m$, $\rho_o$, and $P$ (see Eqs. \ref{eq_def_go_gpm} in Appendix). At $\rho_o = 1$, $G_{\mu\nu}$ has a following simple expression:
  \vskip -0.2in
  \begin{subequations}
  \begin{align}
  G_{\mu\nu} 
  &= G^+_{\mu\nu} + G^-_{\mu\nu}, 
  \\
  G^+_{\mu\nu} 
  &\equiv - (1-m)^{P+\mu-1} - (1-m)^{P+\nu-1}
  \nonumber \\
  &\quad + \tfrac{3-m}{2-m} (1-m)^{\mu+\nu-1}, \\
  G^-_{\mu\nu}
  &\equiv - \left( 1 - (1-m)^P \left( \tfrac{mP}{1-m} + \tfrac{3-m}{2-m} \right) \right) (1-m)^{P-(\nu-\mu)}. 
  \end{align}
  \end{subequations}
  \vskip -0.2in
\end{theorem}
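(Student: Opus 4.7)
The plan is to start from the closed form $\bar{\epsilon}_f = \tr(Q^T C^{out} Q)$ with $Q \equiv I - (I+C^{in,U})^{-1}C^{in}$ provided by Theorem~\ref{theorem_epsilonf}, and to linearize in $\delta M$. Write $C^{in}_0 \equiv (1-m)I + mJ$ with $J \equiv \bm{1}\bm{1}^T$ the all-ones matrix, let $T$ be the strictly upper-triangular matrix of ones so that $C^{in,U}_0 = mT$, and set $R \equiv (I+mT)^{-1}$. The decompositions $C^{in} = C^{in}_0 + \delta M$ and $C^{in,U} = mT + \delta M^U$ combined with a first-order Neumann expansion give $(I+C^{in,U})^{-1} = R - R\,\delta M^U R + \Order(\delta_m^2)$. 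Substituting and setting $Q_0 \equiv I - R C^{in}_0$ yields $Q = Q_0 + \delta Q + \Order(\delta_m^2)$ with
\begin{align}
\delta Q = -R\,\delta M + R\,\delta M^U\,(I-Q_0).
\end{align}

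The second step is to identify $G_{\mu\nu}$ as the coefficient of $\delta M_{\mu\nu}$ for $\mu<\nu$ in $\delta\bar\epsilon_f = 2\tr(Q_0^T C^{out}\,\delta Q) + \Order(\delta_m^2)$, using the symmetry of both $C^{out}$ and $\delta M$. The first piece $-R\,\delta M$ contributes a term proportional to $D_{\mu\nu}+D_{\nu\mu}$ with $D \equiv Q_0^T C^{out} R$, which is manifestly symmetric in the index pair and will combine to give $G^+_{\mu\nu}$. The second piece $R\,\delta M^U(I-Q_0)$ contributes a term proportional to $\bigl((I-Q_0)D\bigr)_{\nu\mu}$; because it only samples the strict upper-triangular entries of $\delta M$, it breaks the $\mu\!\leftrightarrow\!\nu$ symmetry and is the origin of $G^-_{\mu\nu}$ and of its dependence solely on the index gap $\nu-\mu$.

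For the closed-form evaluation at $\rho_o=1$, backward substitution in $(I+mT)R=I$ gives $R_{\mu\mu}=1$ and $R_{\mu\nu}=-m(1-m)^{\nu-\mu-1}$ for $\mu<\nu$. Since $C^{out}=J$ is rank one, $D = (Q_0^T\bm{1})(R^T\bm{1})^T = u v^T$, and elementary geometric sums evaluate $u_\mu = (1-m)^P - (1-m)^\mu$ and $v_\nu = (1-m)^{\nu-1}$. These immediately give $D_{\mu\nu}+D_{\nu\mu}$ as a linear combination of $(1-m)^{P+\mu-1}$, $(1-m)^{P+\nu-1}$, and $(1-m)^{\mu+\nu-1}$, matching the structure of $G^+_{\mu\nu}$. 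For the second contribution, decompose $I-Q_0 = (1-m)R + mRJ$: the factor $(Ru)_\mu$ involves the squared geometric sum $\sum_{k}(1-m)^{2k}=\tfrac{1-(1-m)^{2P}}{m(2-m)}$, which is the origin of the $\tfrac{1}{2-m}$ appearing in both $G^+$ and $G^-$, while $(R\bm{1})_\mu = (1-m)^{P-\mu}$ is an ordinary geometric sum and produces the $mP$ prefactor in $G^-$. Collecting by exponent of $(1-m)$ and grouping the terms that depend on $\mu+\nu$ separately from those that depend on $\nu-\mu$ yields the decomposition $G_{\mu\nu} = G^+_{\mu\nu}+G^-_{\mu\nu}$.

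The main technical obstacle is the bookkeeping in this final collection: both contributions generate overlapping $(1-m)^{\mu+\nu-1}$ and $(1-m)^{\mu+\nu}$ terms that only combine cleanly when the two pieces are summed, producing the coefficient $\tfrac{3-m}{2-m}$ in $G^+_{\mu\nu}$ and the compact prefactor $1-(1-m)^P\bigl(\tfrac{mP}{1-m}+\tfrac{3-m}{2-m}\bigr)$ in $G^-_{\mu\nu}$. The general-$\rho_o$ formula cited in Appendix Eq.~\ref{eq_def_go_gpm} follows the same scheme with $C^{out} = (1-\rho_o)I + \rho_o J$: the rank-one piece is handled as above via $u$ and $v$, while the isotropic contribution $(1-\rho_o)I$ has no vector factorization and yields the additional $\rho_o$-dependent components $g_o$ and $g_{\pm}$.
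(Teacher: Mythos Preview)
Your approach is correct and is essentially the same strategy as the paper's: linearize the closed form from Theorem~\ref{theorem_epsilonf} via a first-order Neumann expansion of $(I+C^{in,U})^{-1}$, use the explicit entries of $(I+mT)^{-1}$, and collect terms by the exponent of $(1-m)$. The organizational difference is that you exploit the rank-one structure $C^{out}=\bm{1}\bm{1}^T$ at $\rho_o=1$ to factor $D=Q_0^T C^{out}R=uv^T$ and reduce everything to the four vectors $u,v,Ru,R\bm{1}$, whereas the paper works directly with the doubly-indexed quantities $v_{ji}=[\bar M(I+\bar M)^{-T}]_{ij}$ and $u_{jk}$ and sums index-by-index. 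Your packaging is a bit cleaner at $\rho_o=1$; the paper's is more uniform across $\rho_o$ since it never relies on rank-one structure.

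Two minor narrative points worth tightening. First, the split ``first piece $\to G^+$, second piece $\to G^-$'' is not a clean correspondence: the second contribution $2[(I-Q_0)D]_{\nu\mu}$ also contains a $(1-m)^{\mu+\nu}$ term that must be combined with the $4(1-m)^{\mu+\nu-1}$ from the first piece to produce the $\tfrac{3-m}{2-m}$ coefficient in $G^+$ (you acknowledge this later, but the earlier sentence overstates the alignment). Second, the $mP$ factor in $G^-$ originates from $\bm{1}^Tu=\sum_\mu u_\mu$, specifically its $P(1-m)^P$ piece, not from $(R\bm{1})_\mu$ itself; $(R\bm{1})_\mu=(1-m)^{P-\mu}$ is what supplies the $(1-m)^{P-(\nu-\mu)}$ dependence. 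Neither point affects the validity of the argument.
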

Note that, $P \times P$ matrix $G$ specifies the contribution of $(\mu,\nu)$-th task similarity to the final error. 
The proof of the theorem is provided in Appendix \ref{appendix_task_order}.2. 
Fig. \ref{lin_pert}a describes an example of $G_{\mu\nu}$ (here $P = 7$ and $m = 0.3$). In this case, $G_{12}$ is positive while $G_{17}$ is negative, meaning that if you increase the similarity between the first and the second tasks while keeping the rest the same, the total error $\bar{\epsilon}_f$ goes up, but if you increase the similarity between the first and the last tasks, the error instead decreases. 
To understand this task order dependence, we next analyze $G^+$ and $G^-$ separately. 

\subsection{Impact of Task Typicality}
Let us first consider the contribution of $G^+_{\mu\nu}$ term. 
Denoting $\alpha^+ \equiv \tfrac{2-m}{3-m} (1-m)^P$, $G^+_{\mu\nu}$ is rewritten as
\begin{align}
  G^+_{\mu\nu}
  &= \tfrac{3-m}{(2-m)(1-m)} \left( (1-m)^{\mu} - \alpha^+ \right) \left( (1-m)^{\nu} - \alpha^+ \right)
  \nonumber \\
  &\quad - \alpha^+ (1-m)^{P-1},
\end{align}
If $1 > m > 0$, $\tfrac{3-m}{(2-m)(1-m)} > 0$ and $(1-m)^{\mu} \geq (1-m)^P > \alpha^+$ for $\mu=1,2,...,P$. Therefore, $G^+_{\mu\nu}$ is a monotonically decreasing function of both $\mu$ and $\nu$ under $1 > m > 0$ (Fig. \ref{lin_pert}d). 
This means that, to minimize the error contributed from $G^+_{\mu\nu}$, $\delta \epsilon^+_f \equiv \sum_{\mu,\nu} G^+_{\mu\nu} \delta M_{\mu\nu}$, the tasks should be ordered in a way that the residual similarity $\delta M_{\mu\nu}$ takes a small (preferably negative) value for early task pairs and a large value for later task pairs.  
In other words, earlier pairs should be relatively dissimilar to each other, while later pairs should be more similar. 

One heuristic way to achieve this task order is to put the most atypical task at the beginning and the most typical one at the end. Denoting the relative typicality of the task by
$\delta t_{\mu} = \sum_{\nu \neq \mu} \delta M_{\nu\mu}$,
if we arrange tasks as $\delta t_1 \leq \delta t_2 \leq ... \leq \delta t_P$, on average, earlier pairs are dissimilar to each other while the latter ones are similar. 
Below, we denote this ordering as a periphery-to-core rule, as less representative periphery tasks are learned first and more central core tasks are learned later under this principle (Fig. 4b). 
Under a randomly generated input correlation matrix $C^{in}$, periphery-to-core task order robustly outperformed both random and core-to-periphery order, when the average correlation is large positive value (red vs black and blue line in Fig. \ref{lin_pert}e). This was not the case when the average correlation is a small positive value potentially due to contribution from $G^-$ factor. 
Note that, a similar rule was derived by \citet{lin2023theory} based on their analysis of linear regression model, where they proved that when there is one outlier task, the outlier task should be learned in the first half of the task sequence. 

\begin{figure*}[tb]
\begin{center}
\centerline{\includegraphics[width=0.9\linewidth]{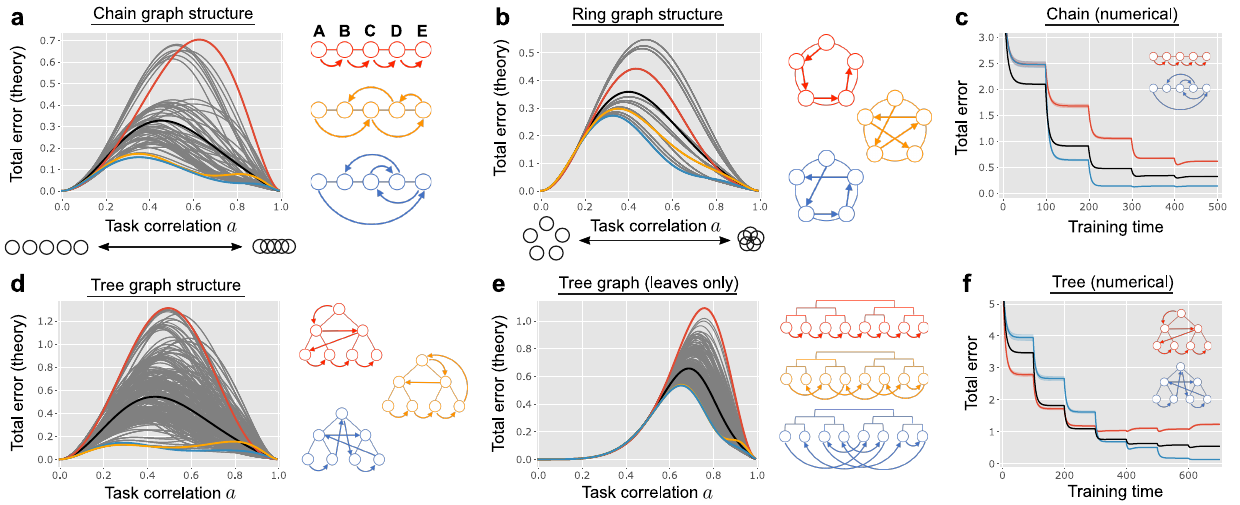}}
\vskip -0.1in
\caption{Optimal task orders for tasks with simple graph-like similarity structures.
\textbf{a,b)} Total error $\epsilon_f$ under all task orders when the similarity structure of five tasks follows chain (a) and ring (b) structures. Each gray line represents one of 120 (=5!) task order, while red, orange, and blue lines highlight three representative task orders depicted on the right. Thick black line is the average error over ordering.   
\textbf{c)} Numerically-estimated learning dynamics of the network when tasks have a chain-graph structure. Red and blue lines represents \textit{A\textrightarrow B\textrightarrow C\textrightarrow D\textrightarrow E} and \textit{A\textrightarrow E\textrightarrow C\textrightarrow D\textrightarrow B} orders depicted in the insets. Black line is the average learning trajectory under random task ordering. 
\textbf{d,e)} The same as panels a and b but for tasks with similarity matrices having tree (d), and tree-leaves similarity structure (e), respectively.
\textbf{f)} The same as panel c, but for tasks having tree-graph-like similarity structure. 
}
\label{lin_graph}
\end{center}
\vskip -0.2in
\end{figure*}

\subsection{Impact of Hamiltonian Path Length}
Let us next focus on $G^-_{\mu\nu}$ term that governs the contribution of the relative distance between tasks in the task sequence. 
The error originating from this term is written as
\begin{align}
  \delta \epsilon^-_f 
  = \alpha^- \sum_{d=1}^{P-1} (1-m)^{P-d} \sum_{\mu=1}^{P-d} \delta M_{\mu,\mu+d},
\end{align}
where $\alpha^- \equiv - 1 + (1-m)^P \left( \tfrac{mP}{1-m} + \tfrac{3-m}{2-m} \right)$ is a coefficient. $\alpha^-$ is negative if $1>m>0$ and $P$ is sufficiently large (Fig. 4f). Thus, to minimize the error $\delta \epsilon^-_f$, the tasks should be arrange in a way that $\delta M_{\mu,\mu+d}$ is small for small $d$, while $\delta M_{\mu,\mu+d}$ is large for large $d$.
In other words, tasks following one another in the task order sequence should be dissimilar to each other, while distant pairs should be similar. 

Given a set of tasks, let us define a task dissimilarity graph by setting each task as a node and dissimilarity between two tasks as the weight of the edge between corresponding nodes (Fig. 4c). 
Then, a task order that learns each task only once forms a Hamiltonian path on the graph, a path that visits all nodes once but only once.  
We can then construct a heuristic solution for minimizing $\delta \epsilon^{-}_f$ by selecting a task order that yields the longest Hamiltonian path. 
When tasks have the same similarity with each other in terms of $C^{out}$, their similarity depends solely on $C^{in}$, allowing us to define dissimilarity as $d_{\mu\nu} \equiv 1 - C^{in}_{\mu\nu}$. 
Thus, the total length of the Hamiltonian path induced by a given task order follows 
$D_H = \sum_{\mu=1}^{P-1} d_{\mu, \mu+1}$.
Consequently, $\delta \epsilon^-_f$ is rewritten as
\begin{align}
  \delta \epsilon^-_f
  = -\alpha^- &\Big( (1-m)^{P-1} D_H
  \nonumber \\
  & + \sum_{d=2}^{P-1} (1-m)^{P-d} \sum_{\mu=1}^{P-d} d_{\mu,\mu+d} \Big) + \text{const.}
\end{align}
Because $-\alpha^-$ is non-negative, small task dissimilarity $d_{\mu\nu}$ (i.e., large task correlation $C^{in}_{\mu\nu}$) generally helps minimizing the error. 
Moreover, we have $0 \leq (1-m)^{P-1} < (1-m)^{P-d}$ for $d=2,3,...$, indicating $D_H$ term has the smallest impact on the error. 
Therefore, by choosing the largest $d_{\mu\nu}$ for $D_H$, we can make $\delta \epsilon^-_f$ small on average. 
We observed this trend robustly even when we sampled $\{C^{in}, C^{out}\}$ randomly (Fig. \ref{lin_pert}g). 
Our work thus provides theoretical insights on why the maximum Hamiltonian path provides a preferable task order, strengthening previous empirical finding \citep{bell2022effect}. 
We call this rule as max-path rule below. 

\subsection{Application to Tasks Having Simple Graph Structures}
The analyses above elucidated two principles underlying task order optimization.  
To illustrate these principles, we next examine task order optimization for a set of tasks with a simple task similarity structure.

Figure~\ref{lin_graph}a depicts the total error estimated using Eq.~\ref{eq_epsilon_f_analytical} in a continual learning scenario involving five tasks with a chain-like similarity. We configure the input correlation matrix $C^{in}$ such that tasks \textit{A} and \textit{B} are directly correlated, while \textit{A} and \textit{C} are correlated only indirectly through \textit{B}. Specifically, denoting the similarity between neighboring tasks on the task dissimilarity graph as $a$, we set $C^{in}_{AB} = C^{in}_{BC} = a$, $C^{in}_{AC} = a^2$, and so on (see Appendix~\ref{sec_implementation_details}). 
Here, tasks exhibit significant overlap when $a \lesssim 1$, while tasks become independent in the limit $a \to 0$ (x-axis of Figure~\ref{lin_graph}a). We set $C^{out}$ to one for all task pairs.
Each line in Figure~\ref{lin_graph}a represents the error under a specific task order. For example, the \textit{A\textrightarrow B\textrightarrow C\textrightarrow D\textrightarrow E} task order, depicted by the red line, consistently performed among the worst, regardless of the similarity $a$ between neighboring tasks. Surprisingly, several task orders robustly outperformed the others, such as \textit{A\textrightarrow C\textrightarrow E\textrightarrow D\textrightarrow B} (orange line) and \textit{A\textrightarrow E\textrightarrow C\textrightarrow D\textrightarrow B} (blue line).  
These task orders align with the two principles described earlier.  
First, the periphery-to-core rule suggests that the initial task should either be \textit{A} or \textit{E}, as these tasks are the least typical.\footnote{Due to mirror symmetry, the \textit{E\textrightarrow A\textrightarrow C\textrightarrow B\textrightarrow D} order exhibits equivalent performance to \textit{A\textrightarrow E\textrightarrow C\textrightarrow D\textrightarrow B}.}  
Second, the max-path rule indicates that subsequent tasks should be as dissimilar as possible. For instance, if the first task is \textit{A}, selecting \textit{E} as the second task, as in the blue line, maximizes the distance.  
Notably, there was approximately a seven-fold difference in performance between the best and worst task orders, underscoring the critical importance of task order in continual learning.

\begin{figure*}[tb]
\begin{center}
\centerline{\includegraphics[width=1.0\linewidth]{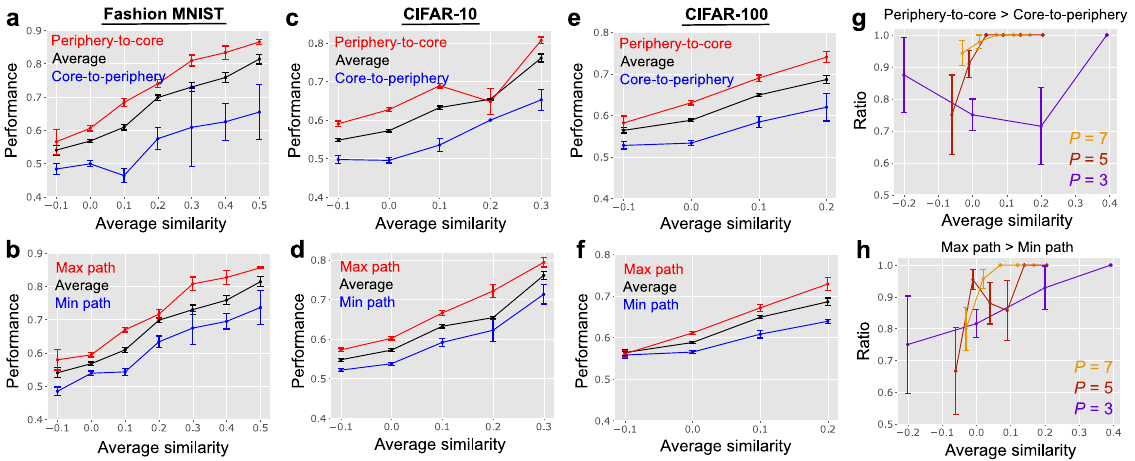}}
\vskip -0.1in
\caption{Task order preference in continuous image classification tasks.
\textbf{a–f)} Continual learning performance, defined as the average test accuracy across all the tasks after learning, under various task orders. Panels (a, c, e) compare the periphery-to-core rule against the core-to-periphery rule, whereas panels (b, d, f) compare the max-path rule with the min-path rule.
\textbf{g, h)} The ratio of task sets where the periphery-to-core rule outperforms the core-to-periphery rule (g), and where the max-path rule outperforms the min-path rule (h), under CIFAR-100. Different colors represent results for different numbers of tasks ($P=3,5,7$).
See Appendix \ref{sec_implementation_details}.3 for details.
}
\label{order_nonlin}
\end{center}
\vskip -0.2in
\end{figure*}

We observed analogous trends when applying the same analysis to tasks with ring, tree, and leaves structures (Figs.~\ref{lin_graph}b, \ref{lin_graph}d, and \ref{lin_graph}e, respectively).  
For tasks with a tree-like similarity structure, as shown in Figure~\ref{lin_graph}d, the error was minimized when tasks corresponding to leaf nodes were learned first, followed by tasks associated with root nodes (the orange and blue trees in Fig.~\ref{lin_graph}d). This result aligns with the periphery-to-core rule.  
When only the leaf nodes were considered as tasks, as illustrated in Figure~\ref{lin_graph}e, the optimal task order exhibited a complex pattern of hopping across tasks (blue tree in Fig.~\ref{lin_graph}e), consistent with the max-path rule.

Numerical simulations validated the analytical results (Figs.~\ref{lin_graph}c and \ref{lin_graph}f) and further revealed intricate learning dynamics.  
In Figure~\ref{lin_graph}f, the red task order initially outperformed the black line representing the average performance, while the blue task order performed worse. However, this trend reversed around the fourth task.  
These findings indicate that the optimal task order is often non-trivial, and a greedy approach optimizing task-by-task error may lead to suboptimal performance.

\section{Application to Image Classification Tasks} \label{sec_image_classification}
Our analytical investigation in the linear teacher-student setting highlighted two principles for task order optimization: the periphery-to-core rule and the max-path rule. To evaluate the potential applicability of these principles to more general settings, we next explore continual learning of image classification tasks.

\subsection{Empirical Estimation of Task Similarity}
To apply these principles, it is necessary to first measure the similarity between tasks. Here, we estimate the similarity between tasks $A$ and $B$ by measuring the zero-shot transfer performance between them (Fig. \ref{sim_mat}). 
Specifically, we train a network for task $A$, obtaining the learned weights $W_A$. We then measure the error of this trained network on task $B$, denoted as $\epsilon_B [W_A]$. Since the transfer performance from task $A$ to $B$ generally differs from that of $B$ to $A$\, we take the mean of both directions and define the similarity $\rho_{AB}$:
\begin{align} \label{eq_def_rhoAB_nonlin}
  \rho_{AB} = 1 - \frac{1}{2} \left( \sqrt{\frac{\epsilon_B [W_A]} { \epsilon_{B,sf} [W_A]}} + \sqrt{\frac{\epsilon_A [W_B]} { \epsilon_{A,sf} [W_B]}} \right).
\end{align}
Here, $\epsilon_{B,sf} [W_A]$ represents the error on task $B$ with label shuffling, which serves as the chance-level error. The square root is taken because the error scales with the squared value of task correlation in our linear model (see Appendix \ref{sec_implementation_details}.4). 

Although this method requires training the network on all $P$ tasks, the computational complexity of training is $\Order (P)$, which is significantly smaller than the naive task order optimization that requires a computational cost of $\Order (P!)$. 
Furthermore, this method only requires the inputs and outputs of the trained network, making it applicable even in situations where the model's internal details are inaccessible.

\subsection{Numerical Results}
We estimated the performance of the periphery-to-core rule and max-path rule in task-incremental continual learning using Fashion-MNIST \citep{xiao2017fashion}, CIFAR-10, and CIFAR-100 dataset \citep{krizhevsky2009learning} (see Appendix \ref{sec_implementation_details} for the details). 
For the Fashion-MNIST and CIFAR-10 datasets, we randomly generated five binary image classification tasks by dividing 10 labels into 5 pairs without replacement. In the case of CIFAR-100, we selected 10 labels out of 100 labels randomly and generated 5 binary classifications. 
For Fashion-MNIST, we trained a multi-layered perceptron with two hidden layers, while for CIFAR-10/100, we used a convolutional neural network with two convolutional layers and one dense layer, to explore robustness against the model architecture.  

We found that the final performance was modulated by the estimated average similarity among tasks, $\bar{\rho} = \tfrac{1}{P(P-1)} \sum_{\mu \neq \nu} \rho_{\mu\nu}$, we thus plotted the performance of each task-order rule as a function of the average similarity (Fig. \ref{order_nonlin}). 
In all three settings, we found that the periphery-to-core rule robustly outperforms the core-to-periphery rule and average performance over random ordering (Fig. \ref{order_nonlin}a,c,e). Similarly, the max-path rule outperformed both the min-path rule and the random ordering (Fig. \ref{order_nonlin}b,d,e; see also \citet{bell2022effect}). 
Moreover, we observed consistent results under a continual learning of a multi-class classification (Fig.\ref{order_nonlin_supp}a and b). 
The periphery-to-core rule outperformed the max-path rule on average, but the difference was small (red lines in Fig. \ref{order_nonlin} top vs bottom).
When we increased the number of binary classification tasks from 3 to 7 using CIFAR-100, the performance advantage periphery-to-core over core-to-periphery increased (Fig. \ref{order_nonlin}g) as expected from the linear model (Fig. \ref{lin_pert_supp}c).
This was not evident for max-path and min-path rules potentially because the difference was already high under $P=3$ (Fig. \ref{order_nonlin}h). 

We also investigated the inference of task similarity and ordering from a small subset of training data. This extension is crucial, as it demonstrates the practical relevance of our theory to real-world machine learning settings where full access to all training data upfront is often unrealistic (in contrast, when complete data is available, naive multi-task learning may suffice). When we reduced the number of training samples used to estimate task similarity across tasks in CIFAR-10, the relative advantage of both the periphery-to-core rule and the max-path rule over a random task order remained robust. Even when only 1\% of the training data was used for estimating task similarity, we observed a performance gain comparable to that in the full data scenario (Panels c and d vs. g and h in Fig. \ref{sparse_cifar10}). However, when the amount of data was reduced to 0.1\% (approximately 10 samples per task), the performance gain became non-significant. We observed similar trends with both the Fashion-MNIST and CIFAR-100 datasets, although the results for CIFAR-100 were less robust, particularly in the negative similarity regime (Fig. \ref{sparse_order_nonlin}).
These results suggest the robustness of the task order optimization principles found in our simple analysis. 

\section{Discussion}
In this work, we derived a simple analytical expression to explain how task similarity and ordering influence continual learning performance in a linear model with latent structure. Based on this result, we proposed two principles for task order optimization: the periphery-to-core rule and the max-path rule, the latter of which was predicted by \citet{bell2022effect}. We validated these principles in task-incremental continual image classification tasks using both multi-layer perceptrons and convolutional neural networks. Thus, this work proposes basic principles for task order optimization in the context of continual learning for multiple tasks.

\subsection*{Limitations}
Our theoretical results were derived in a linear model under the assumption of random task generation, which limits their direct applicability. However, we numerically confirmed that the proposed ordering rules hold in both convolutional neural networks and multi-layer perceptrons trained for continual image recognition tasks. Future work should further evaluate these rules in domains closer to real-world applications, including deep-RL, robotics, and language models.
Additionally, in this work, we restricted the model setting to scenarios where each task is learned only once and trained to convergence. The first assumption can be readily relaxed as long as the total number of tasks remains small (see Appendix \ref{appendix_task_sim}.4). Relaxing the second assumption is an important direction for future work. 

\section*{Acknowledgements}
This work was partially supported by McDonnell Center for Systems Neuroscience.  

\section*{Impact Statement}
Task order optimization for continual learning of multiple tasks may potentially contribute beyond the field of machine learning from school curriculum design \citep{rafferty2016faster, zhu2018overview} to animal training protocol in neuroscience experiments \citep{krueger2009flexible}. Nevertheless, due to theoretical nature of this work, there are no specific societal consequence that we feel must be highlighted here.


\bibliography{refs}
\bibliographystyle{icml2025}

\newpage
\appendix
\onecolumn

\section{Analysis of the Impact of Task Similarity on Continual Learning} \label{appendix_task_sim}
\subsection{Model Setting}
Below, we analyze task-order dependence of continual learning using linear teacher-student models with a latent factor. 
In teacher-student models, the generative model of the task parameterized explicitly by the teacher model, making the learning dynamics and the performance analytically tractable \citep{gardner1989three, saad1995line, zdeborova2016statistical}. 
Here, the generative model for input $\vx \in \Real^{N_x}$ and the target output $\vy \in \Real^{N_y}$ is constructed as
\begin{align}
 \vs \sim \mathcal{N} \left(0, I\right), \quad
 \vx = A_{\mu} \vs, \quad
 \vy^* = B_{\mu} \vs,
\end{align}
where $\vs \in \Real^{N_s}$ is the latent variable that underlies $\vx$ and $\vy^*$, $I$ is the identity matrix, and $A_{\mu} \in \Real^{N_x \times N_s}$ and $B_{\mu} \in \Real^{N_y \times N_s}$ are mixing matrices for the input and the target output at task $\mu = 1,...,P$, respectively.

We generate matrices $\{ A_{\mu}, B_{\mu} \}_{\mu=1}^P$ randomly but with task-to-task correlation. 
We specify the element-wise correlation among input generation matrices $\{A_{\mu}\}_{\mu=1}^P$ by a $P\times P$ correlation matrix $C^{in}$ and specify the correlation among the target output generation matrices $\{ B_{\mu} \}_{\mu=1}^P$ by another $P \times P$ correlation matrix $C^{out}$.
$C^{in}$ and $C^{out}$ are constrained to be correlation matrices, but arbitrary otherwise. 
For all $1 \leq i \leq N_x$ and $1 \leq j \leq N_s$,
we generate $(i,j)$-th elements of matrices $A_1, ..., A_P$ by jointly sampling them from a Gaussian distribution with mean zero and covariance $\tfrac{1}{N_s} C^{in}$:
\begin{align}
    \begin{pmatrix} A^1_{ij} \\ \vdots \\ A^P_{ij}
    \end{pmatrix}
    \sim \gauss \left( \bm{0}, \tfrac{1}{N_s} C^{in} \right).
\end{align}
Similarly, we generate $(i,j)$-th elements of matrices $B_1, ..., B_P$ by $(B^1_{ij}, ..., B^P_{ij})^T \sim \gauss \left( \bm{0}, \tfrac{1}{N_s} C^{out} \right)$.
Note that, under this construction, two different elements in a matrix $A_{\mu}$ are independent with each other, but the same element in matrices for two different tasks are correlated with each other. 
Although the random task generation assumption limits direct applicability of our theory, it enables us to obtain insights into how task similarity influences the overall performance and optimal task order.   
Moreover, our analytical results up to Eq. \ref{eq_def_prod_IminusUUT} hold for arbitrary mixing matrices $\{ A_{\mu}, B_{\mu} \}$, as they don't assume the expectation over $\{ A_{\mu}, B_{\mu} \}$.

The student network that learns the task is specified to be a linear network:
\begin{align}
  \vy = W \vx,
\end{align}
where $W \in \Real^{N_y \times N_x}$ is the trainable weight. 
The mean-squared error between the output of the student network $\vy$ and the target for the $\mu$-th task $\vy^*$ is given by
\begin{align}
  \epsilon_{\mu} [W] 
  \equiv \tfrac{1}{N_y}\lla \llv \vy - \vy^* \rrv^2 \rra_{\vs}
  = \tfrac{1}{N_y} \llv B_{\mu} - W A_{\mu} \rrv_F^2 
\end{align}
The second equality follows from the Gaussianity of $\vs$. 
We consider task-incremental continual learning \citep{van2019three} where the network is trained for task $\mu= 1,...,P$ in sequence. 
During the training for the $\mu$-th task, weight $W$ is updated by gradient descent on error $\epsilon_{\mu}$:
\begin{align} \label{eq_lst_gd_discrete}
  W \leftarrow 
  W - \eta \frac{\p \epsilon_{\mu} [W]}{\p W}
  = W - \frac{2\eta}{N_y} (B_{\mu} - W A_{\mu}) A_{\mu}^T.
\end{align}
We denote the weight after training on task $\mu$ as $W_{\mu}$. The total error on all tasks at the end of all $P$ task learning becomes:
\begin{align}
  \epsilon_f
  \equiv \sum_{\mu=1}^P \epsilon_{\mu} [W_P] 
  = \frac{1}{N_y} \sum_{\mu=1}^P \llv B_{\mu} - W_P A_{\mu} \rrv_F^2.
\end{align}
To uncover how similarity between tasks influences the final error, in this work, we focus on the average performance over randomly generated mixing matrices $\{ A_{\mu}, B_{\mu} \}_{\mu=1}^P$ under a fixed pair of task correlation matrices $C^{in}, C^{out}$. 
We define the average of the final error $\epsilon_f$ over generative models $\{A_{\mu}, B_{\mu}\}_{\mu=1}^P$ by
\begin{align} 
  \bar{\epsilon}_f \equiv 
  \lla \epsilon_f \rra_{A,B}. 
\end{align}
Below, we first derive the analytical expression of $W_{\mu}$ then estimate the total final error $\bar{\epsilon}_f$. 

\subsection{The Weight after Continual Learning of $P$ Tasks}
Considering the gradient flow limit of learning dynamics, the weight update (Eq. \ref{eq_lst_gd_discrete}) is rewritten as
\begin{align}
  \frac{dW}{dt}
  = - (B_{\mu} - W A_{\mu}) A_{\mu}^T. 
\end{align}
Let us denote singular value decomposition (SVD) of $A_{\mu}$ by $A_{\mu} = U_{\mu} \Lambda_{\mu} V_{\mu}^T$, where $U_{\mu} \in \Real^{N_x \times N_o}$ and $V_{\mu} \in \Real^{N_s \times N_o}$ are semi-orthonormal matrices (i.e., $U^T U = V^T V = I$) and $\Lambda_{\mu} \in \Real^{N_o \times N_o}$ is a non-negative diagonal matrix. 
Then, at any point during learning, there exists a matrix $Q(t) \in \Real^{N_y \times N_o}$ such that $W(t)$ is written as $W(t) = W_{\mu-1} + Q(t) U_{\mu}^T$ because weight change during learning of the $\mu$-th task is constrained to the space spanned by $U_{\mu}^T$.
Thus, at the convergence of learning, $\frac{dW}{dt} = 0$, we have
\begin{align}
  \left( B_{\mu} - [W_{\mu-1} + Q_{\mu} U_{\mu}^T] A_{\mu} \right) A_{\mu}^T = 0.
\end{align}
Solving this equation with respect to $Q_{\mu}$, we get
$Q_{\mu} = B_{\mu} V_{\mu} \Lambda^{-1}_{\mu} - W_{\mu-1} U_{\mu}$.
Therefore, the weight after training on the $\mu$-th task becomes
\begin{align}
  W_{\mu} = W_{\mu-1} (I - U_{\mu} U_{\mu}^T) + B_{\mu} A_{\mu}^+,
\end{align}
where $A_{\mu}^+$ is the pseudo-inverse of $A_{\mu}$ ($A_{\mu}^+ = V_{\mu} \Lambda_{\mu}^{-1} U_{\mu}^T$). 
By applying this result iteratively from zero initialization, $W_{\mu}$ is rewritten as
\begin{align}
  W_{\mu} = \sum_{\nu=1}^{\mu} (B_{\nu} A_{\nu}^+) \prod_{\rho=\nu+1}^{\mu} (I - U_{\rho} U_{\rho}^T),
\end{align}
where $\prod_{\rho=\nu+1}^{\mu} (I - U_{\rho} U_{\rho}^T)$ is the identity matrix if $\mu = \nu$, otherwise,
\begin{align} \label{eq_def_prod_IminusUUT}
  \prod_{\rho=\nu+1}^{\mu} (I - U_{\rho} U_{\rho}^T)
  = (I - U_{\nu+1} U_{\nu+1}^T) (I - U_{\nu+2} U_{\nu+2}^T) \cdot\cdot\cdot (I - U_{\mu} U_{\mu}^T).
\end{align}

To further investigate how task similarity impacts continual learning performance, below we focus on the large $N_x$ regime, and analyze the learning behavior at $\tfrac{N_s}{N_x} \to 0$ limit.
This assumption of the presence of low-dimensional latent factor is consistent with many real-world datasets \citep{yu2017compressing, cohen2020separability}. 
If $A_{\mu}$ is a very-tall random matrix (i.e., $N_x \gg N_s$), pseudo-inverse $A_{\mu}^+$ is approximated by a scaled transpose $\gamma A_{\mu}^T$, and $U_{\mu} U_{\mu}^T$ approximately follows $U_{\mu} U_{\mu}^T \approx \gamma A_{\mu} A_{\mu}^T$, where $\gamma = \tfrac{N_s}{N_x}$ (see Appendix \ref{appendix_task_sim}.4).
Thus, we have
\begin{align} \label{eq_Wmu_approx}
  W_{\mu} \approx 
  \gamma \sum_{\nu=1}^{\mu} (B_{\nu} A_{\nu}^T) \prod_{\rho=\nu+1}^{\mu} (I - \gamma A_{\rho} A_{\rho}^T).
\end{align}
Using the approximation from Eq. \ref{eq_Wmu_approx}, the error on the $\mu$-th task after training on $\nu$-th task, $\epsilon_{\mu} [W_{\nu}]$, is
\begin{align} \label{eq_emu_Wnu}
  \epsilon_{\mu} [W_{\nu}] 
  = \llv B_{\mu} - W_{\nu} A_{\mu} \rrv_F^2
  \approx \llv B_{\mu} -\gamma \sum_{\rho=1}^{\nu} (B_{\rho} A_{\rho}^T) \prod_{\sigma=\rho+1}^{\nu} (I - \gamma A_{\sigma} A_{\sigma}^T) A_{\mu}\rrv_F^2. 
\end{align}

\subsection{Proof of Theorem \ref{theorem_epsilonf}}
Substituting $W_P$ with Eq. \ref{eq_Wmu_approx}, at $\tfrac{N_s}{N_x} \to 0$ limit, $\bar{\epsilon}_f$ is rewritten as
\begin{align} \label{eq_bar_ef_expansion}
  \bar{\epsilon}_f 
  &= \frac{1}{N_y} \sum_{\mu=1}^P 
  \lla \llv B_{\mu} 
  - \gamma \sum_{\rho=1}^P B_{\rho} A_{\rho}^T \prod_{\sigma=\rho+1}^P (I - \gamma A_{\sigma} A_{\sigma}^T) A_{\mu}\rrv_F^2 \rra
  \nonumber \\
  &= \frac{1}{N_y} \sum_{\mu=1}^P \lla \llv B_{\mu} \rrv_F^2 \rra
  - \frac{2\gamma}{N_y} \sum_{\mu=1}^P \sum_{\rho=1}^P 
  \lla \tr \left[ B_{\mu}^T B_{\rho} A_{\rho}^T \prod_{\sigma=\rho+1}^P (I - \gamma A_{\sigma} A_{\sigma}^T) A_{\mu} \right] \rra
  \nonumber \\
  &\quad + \frac{\gamma^2}{N_y} \sum_{\mu=1}^P
  \lla \llv \sum_{\rho=1}^P B_{\rho} A_{\rho}^T \prod_{\sigma=\rho+1}^P (I - \gamma A_{\sigma} A_{\sigma}^T) A_{\mu} \rrv_F^2 \rra
\end{align}

Taking expectation over $\{A_{\mu}, B_{\mu}\}_{\mu=1}^P$, the first term is $\lla \llv B_{\mu} \rrv_F^2 \rra = N_y$.
The second term is rewritten as
\begin{align}
  &\frac{\gamma}{N_y} \sum_{\mu=1}^P \sum_{\rho=1}^P 
  \lla \tr \left[ B_{\mu}^T B_{\rho} A_{\rho}^T \prod_{\sigma=\rho+1}^P (I - \gamma A_{\sigma} A_{\sigma}^T) A_{\mu} \right] \rra
  \nonumber \\
  &= \frac{1}{N_x} \sum_{\mu=1}^P \sum_{\rho=1}^P 
  C^{out}_{\mu\rho} \lla  \tr \left[ A_{\rho}^T (I - \gamma A_{\rho+1} A_{\rho+1}^T ) \cdot \cdot \cdot (I - \gamma A_P A_P^T ) A_{\mu} \right] \rra
  \nonumber \\
  &= \frac{1}{N_x} \sum_{\mu=1}^P \sum_{\rho=1}^P 
  C^{out}_{\mu\rho} \lla 
 \tr [A_{\rho}^T A_{\mu}]
 - \gamma \sum_{\sigma_1 = \rho+1}^P \tr [ A_{\rho}^T A_{\sigma_1} A_{\sigma_1}^T A_{\mu}]
 + \gamma^2 \sum_{\sigma_1 = \rho+1}^{P-1} \sum_{\sigma_2 = \sigma_1+1}^P \tr [A_{\rho}^T A_{\sigma_1} A_{\sigma_1}^T A_{\sigma_2} A_{\sigma_2}^T A_{\mu}] - ...
 \rra
 \nonumber \\
 &= \frac{1}{N_x} \sum_{\mu=1}^P \sum_{\rho=1}^P 
  C^{out}_{\mu\rho} \sum_{k=0}^{P-\rho} 
 (-\gamma)^k \sum_{\rho < \sigma_1 < ... < \sigma_k \leq P} \lla \tr \left[ A_{\rho}^T A_{\sigma_1} A_{\sigma_1}^T ... A_{\sigma_k} A_{\sigma_k}^T A_{\mu} \right] \rra.
\end{align}
In the first line, we took expectation over $\{ B_{\mu} \}$, which yields
$\tfrac{\gamma}{N_y} \lla \tr [B_{\mu}^T B_{\rho} M] \rra_B
= \tfrac{N_s/N_x}{N_y} \tfrac{N_y C^B_{\mu\rho}}{N_s} \tr[M]
= \tfrac{C^B_{\mu\rho}}{N_x} \tr[M]$ for arbitrary matrix $M$. 
In the third line, we rearranged the terms inside the trace based on $\gamma$ dependence.
The summation $\sum_{\rho < \sigma_1 < ... < \sigma_k \leq P}$ in the last line is summation over a set of indices $\sigma_1$, $\sigma_2$, ..., $\sigma_k$ that satisfy $\rho < \sigma_1 < \sigma_2 < ... < \sigma_k \leq P$ condition. 
Under $N_x \gg N_s$, the expectation term in the equation above follows (see Appendix \ref{appendix_task_sim}.4)
\begin{align} \label{eq_gamma_tr_A_ap2}
  \lla \gamma^k \tr \left[ A_{\rho}^T A_{\sigma_1} A_{\sigma_1}^T ... A_{\sigma_k} A_{\sigma_k}^T A_{\mu} \right] \rra
  = N_x \left( C^{in}_{\rho \sigma_1} C^{in}_{\sigma_1 \sigma_2} ... C^{in}_{\sigma_k \mu} + \Order \left( \tfrac{N_s}{N_x} \right) \right).
\end{align}
Moreover, if we define an upper-triangle matrix $C^{in,U} \in \Real^{P\times P}$ by
\begin{align} \label{eq_def_CinU}
  C^{in,U} = 
  \begin{cases} 
  C^{in}_{\mu \nu} & (\text{if} \quad \mu < \nu) \\
  0 & (\text{otherwise})
  \end{cases},
\end{align}
we have
\begin{align}
  \sum_{\rho < \sigma_1 < ... < \sigma_k \leq P} \lla \tr \left[ A_{\rho}^T A_{\sigma_1} A_{\sigma_1}^T ... A_{\sigma_k} A_{\sigma_k}^T A_{\mu} \right] \rra
  &= \sum_{\rho < \sigma_1 < ... < \sigma_k \leq P} C^{in}_{\rho \sigma_1} C^{in}_{\sigma_1 \sigma_2} ... C^{in}_{\sigma_k \mu}
  + \Order\left( \tfrac{N_s}{N_x} \right)
  \nonumber \\
  &= \left[ \left( C^{in,U} \right)^k C^{in} \right]_{\rho \mu}
  + \Order\left( \tfrac{N_s}{N_x} \right)
\end{align}
The last line follows because the upper triangle matrix $C^{in,U}$ satisfies
\begin{align}
  \sum_{\sigma_1=1}^P \sum_{\sigma_2=1}^P ... \sum_{\sigma_k=1}^P
  C^{in,U}_{\rho \sigma_1} C^{in,U}_{\sigma_1 \sigma_2} ... C^{in,U}_{\sigma_{k-1} \sigma_k} C^{in}_{\sigma_k \mu}
  &= \sum_{\sigma_1=\rho+1}^P \sum_{\sigma_2=\sigma_1+1}^P ... \sum_{\sigma_k=\sigma_{k-1}+1}^P C^{in,U}_{\rho \sigma_1} C^{in,U}_{\sigma_1 \sigma_2} ... C^{in,U}_{\sigma_{k-1} \sigma_k} C^{in}_{\sigma_k \mu}
  \nonumber \\
  &= \sum_{\rho < \sigma_1 < ... < \sigma_k \leq P} C^{in}_{\rho \sigma_1} C^{in}_{\sigma_1 \sigma_2} ... C^{in}_{\sigma_{k-1} \sigma_k} C^{in}_{\sigma_k \mu}.
\end{align}
Moreover, because $\left[ (C^{in,U})^k \right]_{\rho \nu} = 0$ for any $\nu$ if $\rho$ satisfies $\rho \geq P-k+1$, we have
\begin{align}
  \sum_{k=0}^{P-\rho} \left[ \left( C^{in,U} \right)^k C^{in} \right]_{\rho \mu}
  = \sum_{k=0}^P \left[ \left( C^{in,U} \right)^k C^{in} \right]_{\rho \mu}
\end{align}
Therefore, taking $\tfrac{N_s}{N_x} \to 0$ limit, it follows that
\begin{align}
 &\frac{1}{N_x} \sum_{\mu=1}^P \sum_{\rho=1}^P 
  C^{out}_{\mu\rho} \sum_{k=0}^{\nu-\rho} 
 (-\gamma)^k \sum_{\rho < \sigma_1 < ... < \sigma_k \leq P} \lla \tr \left[ A_{\rho}^T A_{\sigma_1} A_{\sigma_1}^T ... A_{\sigma_k} A_{\sigma_k}^T A_{\mu} \right] \rra
 \nonumber \\
 &\approx \sum_{\mu=1}^P \sum_{\rho=1}^P 
  C^{out}_{\mu\rho} \sum_{k=0}^P (-1)^k \left[ \left( C^{in,U} \right)^k C^{in} \right]_{\rho \mu} 
 \nonumber \\
 &= \sum_{\mu=1}^P \sum_{\rho=1}^P 
  C^{out}_{\mu\rho} \left[ \left( I + C^{in,U} \right)^{-1} C^{in} \right]_{\rho \mu}
 \nonumber \\
 &= \tr \left[ C^{out} \left( I + C^{in,U} \right)^{-1} C^{in} \right]. 
\end{align}
In the third line, we used 
\begin{align}
  (I + C^{in,U}) \sum_{k=0}^P (-1)^k \left(C^{in,U}\right)^k
  = I + (-1)^P \left(C^{in,U}\right)^{P+1} = I.
\end{align}

We can evaluate the third term of Eq. \ref{eq_bar_ef_expansion} in an analogous manner:
\begin{align}
  &\frac{\gamma^2}{N_y} \sum_{\mu=1}^P
  \lla \llv \sum_{\rho=1}^P B_{\rho} A_{\rho}^T \prod_{\sigma=\rho+1}^P (I - \gamma A_{\sigma} A_{\sigma}^T) A_{\mu} \rrv_F^2 \rra
  \nonumber \\
  &= \frac{\gamma^2}{N_y} \sum_{\mu=1}^P \sum_{\rho=1}^P \sum_{\rho'=1}^P \lla 
  \tr \left[ B_{\rho'}^T B_{\rho} A_{\rho}^T \prod_{\sigma=\rho+1}^P (I - \gamma A_{\sigma} A_{\sigma}^T ) A_{\mu} \left(  A_{\rho'}^T \prod_{\sigma'=\rho'+1}^P (I - \gamma A_{\sigma'} A_{\sigma'}^T ) A_{\mu} \right)^T \right]
  \rra 
  \nonumber \\
  &= \frac{N_s}{N_x^2} \sum_{\mu=1}^P \sum_{\rho=1}^P \sum_{\rho'=1}^P C^{out}_{\rho\rho'} 
  \lla \tr \left[ 
  A_{\rho}^T (I - \gamma A_{\rho+1} A_{\rho+1}^T) ... (I - \gamma A_P A_P^T) A_{\mu} A_{\mu}^T (I - \gamma A_P A_P^T) ... (I - \gamma A_{\rho'+1} A_{\rho'+1}^T) A_{\rho'} 
  \right] \rra.
\end{align}
The term inside the trace can be expanded as 
\begin{align}
  &\lla \tr \left[ 
  A_{\rho}^T (I - \gamma A_{\rho+1} A_{\rho+1}^T) ... (I - \gamma A_P A_P^T) A_{\mu} A_{\mu}^T (I - \gamma A_P A_P^T) ... (I - \gamma A_{\rho'+1} A_{\rho'+1}^T) A_{\rho'} 
  \right] \rra
  \nonumber \\
  &= \sum_{k=1}^{P-\rho} \sum_{k'=1}^{P-\rho'}
  \sum_{\rho < \sigma_1 < ... < \sigma_k \leq P}
  \sum_{\rho' < \sigma'_1 < ... < \sigma'_k \leq P}
  (-\gamma)^{k+k'} \lla \tr \left[ A_{\rho}^T A_{\sigma_1} A_{\sigma_1}^T ... A_{\sigma_k} A_{\sigma_k}^T A_{\mu} A_{\mu}^T A_{\sigma'_k} A_{\sigma'_k}^T ... A_{\sigma'_1} A_{\sigma'_1}^T A_{\rho'} \right] \rra,
\end{align}
and the expectation over $\{ A_{\mu} \}$ follows (Appendix \ref{appendix_task_sim}. 4)
\begin{align} \label{eq_gamma_tr_ap2_2}
  \gamma^{k+k'} \lla \tr \left[ A_{\rho}^T A_{\sigma_1} A_{\sigma_1}^T ... A_{\sigma_k} A_{\sigma_k}^T A_{\mu} A_{\mu}^T A_{\sigma'_k} A_{\sigma'_k}^T ... A_{\sigma'_1} A_{\sigma'_1}^T A_{\rho'} \right] \rra
  = \frac{N_x^2}{N_s} \left( 
  C^{in}_{\rho \sigma_1} C^{in}_{\sigma_1 \sigma_2} ... C^{in}_{\sigma_k \mu} C^{in}_{\mu \sigma'_{k'} } ... C^{in}_{\sigma'_2 \sigma'_1} C^{in}_{\sigma'_1 \rho'} 
  + \Order \left( \tfrac{N_s}{N_x} \right)
  \right).
\end{align}
Therefore, at $\tfrac{N_s}{N_x} \to 0$ limit, the squared term is evaluated as
\begin{align}
  &\frac{\gamma^2}{N_y} \sum_{\mu=1}^P
  \lla \llv \sum_{\rho=1}^P B_{\rho} A_{\rho}^T \prod_{\sigma=\rho+1}^P (I - \gamma A_{\sigma} A_{\sigma}^T) A_{\mu} \rrv_F^2 \rra
  \nonumber \\
  &= \sum_{\mu=1}^P \sum_{\rho=1}^P \sum_{\rho'=1}^P C^{out}_{\rho\rho'} 
  \left( \sum_{k=0}^{P-\rho} (-1)^k \sum_{\rho < \sigma_1 < ... < \sigma_k \leq P} C^{in}_{\rho \sigma_1} ... C^{in}_{\sigma_k \mu} \right)
  \left( \sum_{k'=0}^{P-\rho'} (-1)^{k'} \sum_{\rho' < \sigma'_1 < ... < \sigma'_{k'} \leq P} C^{in}_{\rho' \sigma'_1} ... C^{in}_{\sigma'_{k'} \mu} \right)
  \nonumber \\
  &= \sum_{\mu=1}^P \sum_{\rho=1}^P \sum_{\rho'=1}^P C^{out}_{\rho\rho'} 
  \left[ (I + C^{in,U})^{-1} C^{in} \right]_{\rho \mu}
  \left[ (I + C^{in,U})^{-1} C^{in} \right]_{\rho' \mu}
  \nonumber \\
  &= \tr \left[ C^{out} (I + C^{in,U})^{-1} C^{in} \left( (I + C^{in,U})^{-1} C^{in} \right)^T \right]. 
\end{align}

Noticing that the first term of Eq. \ref{eq_bar_ef_expansion} is rewritten as
\begin{align}
  \frac{1}{N_y} \sum_{\mu=1}^P \lla \llv B_{\mu} \rrv_F^2 \rra
  = P = \tr [C^{out}], 
\end{align}
at $\tfrac{N_s}{N_x} \to 0$ limit, the final error $\bar{\epsilon}_f$ is written as
\begin{align}
  \bar{\epsilon}_f 
  &= \tr [C^{out}] - 2 \tr[C^{out} (I + C^{in,U})^{-1} C^{in} ]
  + \tr \left[ C^{out} (I + C^{in,U})^{-1} C^{in} \left( (I + C^{in,U})^{-1} C^{in} \right)^T  \right]
  \nonumber \\
  &= \llv \left( C^{out} \right)^{1/2} \left( I - (I + C^{in,U})^{-1} C^{in} \right) \rrv_F^2.
\end{align}
Thus, we obtained the equality in Theorem \ref{theorem_epsilonf}.
Note that because $C^{out}$ is a correlation matrix, there exists a matrix $\left( C^{out} \right)^{1/2}$ such that $\left( C^{out} \right)^{1/2} \left( C^{out} \right)^{1/2} = C^{out}$. 
Because $C^{in} = I + C^{in,U} + (C^{in,U})^T$, $\bar{\epsilon}_f$ is also written as
\begin{align}
  \bar{\epsilon}_f 
  = \llv \left( C^{out} \right)^{1/2} (I + C^{in,U})^{-1} \left( C^{in,U} \right)^T  \rrv_F^2.
\end{align}
Note that, if $C^{in} = I$, the error is zero. This is consistent with previous results showing that in the absence of overlap between tasks, continual learning doesn't suffer from forgetting \citep{ramasesh2020anatomy, lee2021continual, peng2023ideal}.
Additionally, Eq. \ref{eq_gamma_tr_A_ap2} requires $P \ll \tfrac{N_x}{N_s}$ (see Appendix \ref{appendix_task_sim}).4 below), thus the obtained expression doesn't hold when the number of tasks is comparable to the network size.   

\subsection{Expectation over Random Correlated Matrices $\{A_{\mu}\}$}
We first show that $A_{\mu}^+ \to \gamma A_{\mu}^T$ and $U_{\mu} U_{\mu}^T \to \gamma A_{\mu} A_{\mu}^T$ at $\tfrac{N_s}{N_x} \to 0$, where $\gamma = \frac{N_s}{N_x}$ and $U_{\mu}$ is defined by SVD of $A_{\mu}$, $A_{\mu} = U_{\mu} \Lambda_{\mu} V_{\mu}^T$. 
If $\Lambda_{\mu} = \tfrac{1}{\sqrt{\gamma}} I$, then we have $A_{\mu} A_{\mu}^T = U_{\mu} \Lambda_{\mu}^2 U_{\mu}^T = \tfrac{1}{\gamma} U_{\mu} U_{\mu}^T$, and
\begin{align}
A^+_{\mu} 
= V_{\mu} \Lambda_{\mu}^{-1} U_{\mu}^T
= \sqrt{\gamma} \left( U_{\mu} V_{\mu}^T \right)^T
= \gamma \left( \tfrac{1}{\sqrt{\gamma}} U_{\mu} V_{\mu}^T \right)^T
= A_{\mu}^T.
\end{align}
Thus, it is sufficient to show that $\Lambda_{\mu} \to \tfrac{1}{\sqrt{\lambda}} I$ at $\tfrac{N_s}{N_x} \to 0$. 
The mean and variance of $N_s \times N_s$ matrix $A_{\mu}^T A_{\mu}$ over randomly sampled $A_{\mu}$ obey
\begin{align}
  \lla A_{\mu}^T A_{\mu} \rra_A 
  &= \tfrac{N_x}{N_s} I
  \nonumber \\
  \lla [A_{\mu}^T A_{\mu} - \tfrac{1}{\gamma} I] \odot [A_{\mu}^T A_{\mu} - \tfrac{1}{\gamma} I]\rra
  &= \tfrac{N_x}{N_s^2} \left( I + \bm{1} \bm{1}^T \right), 
\end{align}
where $\bm{1}$ is a all-one vector and $\odot$ represents Hadamard product, indicating that the standard deviation of $A_{\mu}^T A_{\mu}$ shows $\Order \left( \tfrac{N_s}{N_x} \right)$ scaling with respect to the mean. Thus, $\gamma A_{\mu}^T A_{\mu} \to I$ at $\tfrac{N_s}{N_x} \to 0$, implying $\Lambda_{\mu} \to \tfrac{1}{\sqrt{\lambda}} I$ at $\tfrac{N_s}{N_x} \to 0$. 

Regarding expectation over $A$ in Eq. \ref{eq_gamma_tr_A_ap2}, expanding the equation up to the next to the leading order, we have
\begin{align}
  &\lla \tr \left[ 
  A_{\rho}^T A_{\sigma_1} A_{\sigma_1}^T ... A_{\sigma_k} A_{\sigma_k}^T A_{\mu} \right] \rra
  \nonumber \\
  &= \sum_{i_0,...,i_k} \sum_{j_0,...,j_k}
  \lla A^{\rho}_{i_0 j_0} A^{\sigma_1}_{i_0 j_1} A^{\sigma_1}_{i_1 j_1} A^{\sigma_2}_{i_1 j_2} A^{\sigma_1}_{i_2 j_2} ... A^{\sigma_k}_{i_{k-1} j_k} A^{\sigma_k}_{i_k j_k} A^{\mu}_{i_k j_0} \rra
  \nonumber \\
  &= \sum_{i_0,...,i_k} \sum_{j_0,...,j_k} \Big(
  \delta_{j_0, j_1, j_2, ..., j_k} \left( \tfrac{1}{N_s} \right)^{k+1} C^{in}_{\rho\sigma_1} C^{in}_{\sigma_1 \sigma_2} ... C^{in}_{\sigma_k \mu}
  \nonumber \\
  &\quad + \sum_l
  \delta_{i_{l-1} i_l} \delta_{j_0, j_1, ..., j_{l-1}, j_{l+1}, ..., j_k} \left( \tfrac{1}{N_s} \right)^{k+1} C^{in}_{\rho\sigma_1} C^{in}_{\sigma_1 \sigma_2} ... C^{in}_{\sigma_{l-2} \sigma_{l-1}} C^{in}_{\sigma_{l-1} \sigma_{l+1}} C^{in}_{\sigma_l \sigma_l} C^{in}_{\sigma_{l+1} \sigma_{l+2}} ... C^{in}_{\sigma_k \mu} + ... \Big)
  \nonumber \\
  &= N_x \left( \tfrac{N_x}{N_s} \right)^k C^{in}_{\rho\sigma_1} C^{in}_{\sigma_1 \sigma_2} ... C^{in}_{\sigma_k \mu}
  + N_s \left( \tfrac{N_x}{N_s} \right)^k \sum_l C^{in}_{\rho\sigma_1} C^{in}_{\sigma_1 \sigma_2} ... C^{in}_{\sigma_{l-1} \sigma_{l+1}} C^{in}_{\sigma_l \sigma_l} ... C^{in}_{\sigma_k \mu} 
  + \Order \left( \left( \tfrac{N_x}{N_s} \right)^k \right)
  \nonumber \\
  &= \gamma^{-k} N_x \left( C^{in}_{\rho\sigma_1} C^{in}_{\sigma_1 \sigma_2} ... C^{in}_{\sigma_k \mu} + \Order \left( \tfrac{N_s}{N_x} \right) \right)
\end{align}
$\delta_{j_0, j_1, ..., j_k}$ in the third line is the Kronecker delta function that returns 1 if $j_0 = j_1 = ... = j_k$, otherwise returns 0.
The third line follows from Isserlis' theorem, which states that the expectation over multivariate normal variables can be decomposed into summation over all pair-wise partitions \citep{helias2020statistical}. 
In the equation above, the partition that pairs neighboring matrices takes $\Order (N_x^{k+1})$ value, while all other partitions yield $\Order (N_x^k)$ value at most because of indices mismatch. 
Note that, the number of second order terms depends on $P$, as suggested by the summation over $l$ in the third line. Thus, we expect that our theory hold only when $P$ satisfies $P \ll \tfrac{N_x}{N_s}$. 

From a parallel argument with the one above, the expectation in Eq. \ref{eq_gamma_tr_ap2_2} is evaluated as
\begin{align}
 &\gamma^{k+k'} \lla \tr \left[ A_{\rho}^T A_{\sigma_1} A_{\sigma_1}^T ... A_{\sigma_k} A_{\sigma_k}^T A_{\mu} A_{\mu}^T A_{\sigma'_k} A_{\sigma'_k}^T ... A_{\sigma'_1} A_{\sigma'_1}^T A_{\rho'} \right] \rra
 \nonumber \\
 &= \left( \tfrac{N_s}{N_x} \right)^{k+k'} 
 \left( \tfrac{N_x}{N_s} \right)^{k+k'+2} N_s \left(
 C^{in}_{\rho \sigma_1} C^{in}_{\sigma_1 \sigma_2} ... C^{in}_{\sigma_k \mu} C^{in}_{\mu \sigma'_{k'} } ... C^{in}_{\sigma'_2 \sigma'_1} C^{in}_{\sigma'_1 \rho'} + \Order(\tfrac{N_s}{N_x}) \right)
 \nonumber \\
 &= \frac{N_x^2}{N_s} \left( 
 C^{in}_{\rho \sigma_1} C^{in}_{\sigma_1 \sigma_2} ... C^{in}_{\sigma_k \mu} C^{in}_{\mu \sigma'_{k'} } ... C^{in}_{\sigma'_2 \sigma'_1} C^{in}_{\sigma'_1 \rho'} 
 + \Order \left( \tfrac{N_s}{N_x} \right)
 \right).
\end{align}

\section{Analysis of the Impact of Task Order on Continual Learning} \label{appendix_task_order}
\subsection{Linear Perturbation Analysis of the Order Dependence}
To further investigate the order-dependence of the final error $\bar{\epsilon}_f$, we aim to decompose the error into interpretable features of task similarity matrix.  
To this end, we further constrain the input similarity matrix $C^{in}$ to
\begin{align}
  C^{in}_{ij} = \begin{cases}
  1 & (\text{if} \quad i = j) \\
  m + \delta M_{ij} & (\text{otherwise}),
  \end{cases}
\end{align}
where $m$ is a constant satisfying $-1 < m < 1$. 
Here, $\delta M_{ij}$ is a small element-wise perturbation added in such a way that $C_{in}$ is a correlation matrix. We define an upper-triangular matrix that consists of the constant component as $\bar{M}$. $(i,j)$-th element of $\bar{M}$ takes $\bar{M}_{ij} = m$ if $j > i$, but $\bar{M}_{ij} = 0$ otherwise.
This constant $\bar{M}$ assumption enables us to evaluate the effect of inverse matrix term in $\bar{\epsilon}_f$ analytically owing to the following lemma:
\begin{lemma}
For any $m$ satisfying $-1 < m < 1$, an upper-triangle matrix $\bar{M}$ satisfies
\begin{align}
  \left[ (I + \bar{M})^{-1} \right]_{ij} 
  = \delta_{ij} - m [ j > i]_+ (1 - m)^{j-i-1},
\end{align}
where $[X]_+$ is the indicator function that returns 1 if $X$ is true, but returns 0 otherwise. 
\end{lemma}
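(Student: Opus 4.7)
The plan is to exploit the nilpotency of $\bar{M}$: since $\bar{M}$ is strictly upper triangular of size $P$, we have $\bar{M}^P = 0$, so the Neumann series for the inverse terminates and is valid for every $m$ with no convergence concern,
\[
(I + \bar{M})^{-1} = \sum_{k=0}^{P-1} (-\bar{M})^k.
\]

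First I would compute $[\bar{M}^k]_{ij}$ in closed form. Expanding the matrix product and using $\bar{M}_{\mu\nu} = m\,[\nu>\mu]_+$, the $(i,j)$ entry collects contributions from strictly increasing chains $i = l_0 < l_1 < \cdots < l_k = j$, each contributing $m^k$. For $j > i$ the number of such chains equals the number of ways to choose the $k-1$ intermediate indices from $\{i+1,\ldots,j-1\}$, namely $\binom{j-i-1}{k-1}$. Hence $[\bar{M}^k]_{ij} = m^k \binom{j-i-1}{k-1}$ for $j > i$ and $k \geq 1$; entries with $j < i$ vanish at every power, and for $j=i$ only the $k=0$ term survives in the Neumann sum.

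Substituting into the Neumann series and applying the binomial theorem, for $j > i$,
\[
[(I+\bar{M})^{-1}]_{ij} = \sum_{k=1}^{j-i} (-m)^k \binom{j-i-1}{k-1} = -m \sum_{l=0}^{j-i-1} (-m)^l \binom{j-i-1}{l} = -m (1-m)^{j-i-1},
\]
which matches the claim; the $j = i$ and $j < i$ cases are immediate from the structure of $\bar{M}^k$ above.

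The only step that requires any genuine care is the combinatorial count of strictly increasing chains, which is what promotes the Neumann coefficients to binomials $\binom{j-i-1}{k-1}$ and thus lets the series collapse to a power of $1-m$; this is routine bookkeeping rather than a real obstacle. As a cross-check I would also verify the formula by direct multiplication, showing $(I+\bar{M})N = I$ for the proposed $N$ and reducing the off-diagonal case to the finite geometric sum $\sum_{l=0}^{j-i-2}(1-m)^l = (1-(1-m)^{j-i-1})/m$; either route is short, but the Neumann argument makes the origin of the factor $(1-m)^{j-i-1}$ transparent.
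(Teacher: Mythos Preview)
Your proof is correct, but it takes a genuinely different route from the paper. The paper simply verifies the formula: it multiplies $(I+\bar{M})$ by the proposed inverse $N$ entrywise and shows the off-diagonal terms collapse to zero via the finite geometric sum $m^2\sum_{k=i+1}^{j-1}(1-m)^{j-k-1} = m\bigl(1-(1-m)^{j-i-1}\bigr)$. This is precisely the cross-check you mention at the end, and it is the paper's entire argument.

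Your primary approach is constructive rather than verificatory: you use nilpotency of $\bar{M}$ to write the inverse as a finite Neumann series, compute $[\bar{M}^k]_{ij}$ by counting strictly increasing chains as $m^k\binom{j-i-1}{k-1}$, and then collapse the alternating sum via the binomial theorem. This buys you an explanation of why the factor $(1-m)^{j-i-1}$ arises (it is the binomial expansion of $\sum_l(-m)^l\binom{j-i-1}{l}$), whereas the paper's direct check merely confirms a formula one must already have guessed. On the other hand, the paper's route is a couple of lines shorter since it avoids the chain-counting bookkeeping. Both are elementary; your version is the more illuminating derivation, the paper's the more economical verification.
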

\begin{proof}
\begin{align} 
  [(I+\bar{M}) (I+\bar{M})^{-1}]_{ij}
  &= \sum_{k=1}^P 
  \left( \delta_{ik} + m [k>i]_+ \right)
  \left( \delta_{kj} - m [j>k]_+ (1-m)^{j-k-1} \right)
  \nonumber \\
  &= \delta_{ij} + [j > i]_+ \left(
  -m(1-m)^{j-i-1} + m 
  -m^2 \sum_{k=i+1}^{j-1} (1 - m)^{j-k-1}
  \right) \nonumber \\
  &= \delta_{ij} + [j > i]_+ \left(
  -m(1-m)^{j-i-1} + m - m \left[ 1 - (1-m)^{j-i-1} \right] 
  \right) \nonumber\\
  &= \delta_{ij}.
\end{align}
\end{proof}

Let us consider the case when the output similarity is the same across all the tasks for simplicity. Then, $C^{out}$ is written as 
\begin{align}
  C^{out} = \rho_o \bm{1}\bm{1}^T + [1-\rho_o] I.
\end{align}
In this problem setting, assuming that $\delta M$ is sufficiently small compared to $\bar{M}$, we can rewrite the error $\bar{\epsilon}_f$ as a linear function of $\delta M$, which enables us to interpret the contribution of different pairwise similarities to the final error. 
The following theorem describes the exact decomposition of the impact of task order in this linear perturbation limit. 
\begin{theorem} \label{theorem_linpert_decompose}
  Let us suppose that all elements of a upper-triangle matrix with zero-diagonal components $\delta M_{ij}$ satisfies, $| \delta M_{ij} | < \delta_m$, where $\delta_m$ is a positive constant.  
  Then, the error $\bar{\epsilon}_f$ is rewritten as below:
  \begin{align}
    \bar{\epsilon}_f 
    = \bar{\epsilon}_f [m, \rho_o]
    + \sum_{i=1}^P \sum_{j = i+1}^P G_{ij} \delta M_{ij} + \Order(\delta_m^2),
  \end{align}
  where
  \begin{subequations}
  \begin{align}
    \bar{\epsilon}_f [m, \rho_o]
    &\equiv \llv \left( \rho_o \bm{1}\bm{1}^T + [1-\rho_o] I \right)^{1/2} (I + \bar{M})^{-1} \bar{M}^T \rrv_F^2 
    \\ \label{eq_def_Gij_full}
    G_{ij} &\equiv g_o (i) + g_o (j) + g_{+} (j+i) + g_{-} (j-i) ,
  \end{align}
  \end{subequations}
  and functions $g_o, g_{-}, g_{+}: \mathbb{Z} \to \mathbb{R}$ that constitute the coefficient $G$ are defined by
  \begin{subequations} \label{eq_def_go_gpm}
  \begin{align} 
    g_o (k) 
    &\equiv \tfrac{(1-\rho_o)m}{2-m} (1-m)^{P-k}
    - \left( \rho_o - \tfrac{(1-\rho_o)m}{2-m} \right) (1-m)^{P+k-1},
    \\
    g_{+} (s)
   &\equiv \left( \rho_o - \tfrac{(1-\rho_o)m}{2-m} \right) \tfrac{3-m}{2-m} (1-m)^{s-1} 
   - \tfrac{(1-\rho_o)m}{2-m} \left( Pm + 2(1-m) - \tfrac{(1-m)^2}{2-m} \right) (1-m)^{2P - s},
    \\ 
    g_{-}(d)
    &\equiv \tfrac{(1-\rho_o)m}{2-m} \tfrac{1}{2-m} (1-m)^{d-1}
   - \left[ \left( \rho_o - \tfrac{(1-\rho_o)m}{2-m} \right) 
 \left( 1 - (1-m)^P \left( \tfrac{mP}{1-m} + \tfrac{3-m}{2-m} \right) \right) - \tfrac{(1-\rho_o)m}{2-m} \tfrac{1}{1-m} \right] (1-m)^{P-d}.
  \end{align}
  \end{subequations}
\end{theorem}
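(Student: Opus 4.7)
The plan is to linearize the formula from Theorem~\ref{theorem_epsilonf} around $\delta M = 0$. Writing $\bar{\epsilon}_f = \llv X \rrv_F^2$ with $X \equiv (C^{out})^{1/2}[I - (I+C^{in,U})^{-1} C^{in}]$, and decomposing $X = X_0 + \delta X + \Order(\delta_m^2)$ where $X_0$ uses the unperturbed matrices $C^{in,U} = \bar{M}$ and $C^{in} = I + \bar{M} + \bar{M}^T$, the expansion $\llv X_0 + \delta X \rrv_F^2 = \llv X_0 \rrv_F^2 + 2\tr[X_0^T \delta X] + \Order(\delta_m^2)$ reduces the problem to evaluating the single trace $2\tr[X_0^T \delta X]$ and reading off the coefficient of each $\delta M_{ij}$ with $i < j$.

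To compute $\delta X$, I would apply the first-order matrix-inverse identity $(I + \bar{M} + \delta M^U)^{-1} = (I+\bar{M})^{-1} - (I+\bar{M})^{-1}\delta M^U (I+\bar{M})^{-1} + \Order(\delta_m^2)$, where $\delta M^U$ is the strict upper-triangular part of the symmetric perturbation $\delta M$ (so $\delta M = \delta M^U + (\delta M^U)^T$). Using $(I+\bar{M})^{-1}(I+\bar{M}+\bar{M}^T) = I + (I+\bar{M})^{-1}\bar{M}^T$, a short calculation yields $X_0 = -(C^{out})^{1/2}(I+\bar{M})^{-1}\bar{M}^T$ and $\delta X = (C^{out})^{1/2}[(I+\bar{M})^{-1}\delta M^U (I+\bar{M})^{-1}\bar{M}^T - (I+\bar{M})^{-1}(\delta M^U)^T]$. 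Substituting into the trace and invoking the lemma's explicit formula $[(I+\bar{M})^{-1}]_{ab} = \delta_{ab} - m[b>a]_+(1-m)^{b-a-1}$ turns every matrix entry into a closed form in $m$ and index differences. Because $C^{out} = \rho_o \bm{1}\bm{1}^T + (1-\rho_o)I$, the trace splits into a rank-one piece (involving $\bm{1}^T (I+\bar{M})^{-1}$ and $(I+\bar{M})^{-1}\bm{1}$) and an identity piece; both reduce to finite geometric sums in powers of $(1-m)$.

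The main obstacle will be the algebraic bookkeeping required to collapse these geometric sums into the four-function decomposition $G_{ij} = g_o(i) + g_o(j) + g_+(i+j) + g_-(j-i)$. Concretely, I would repeatedly apply $\sum_{k=a}^{b}(1-m)^k = \tfrac{(1-m)^a - (1-m)^{b+1}}{m}$ (and its double-sum analogues) to terms arising from products such as $\bm{1}^T (I+\bar{M})^{-1} \bar{M}^T$ and $(I+\bar{M})^{-1}\bar{M}^T(I+\bar{M})^{-1}$, then sort the results by the index-dependence of their exponents: contributions whose exponent depends on $i$ alone (resp.\ $j$ alone) collect into $g_o(i)$ (resp.\ $g_o(j)$); those depending only on the sum $i+j$ collect into $g_+(i+j)$; and those depending only on the gap $j-i$ collect into $g_-(j-i)$. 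Matching the resulting coefficients against Eq.~\ref{eq_def_go_gpm} completes the proof, and specialising to $\rho_o = 1$ (so that the factor $\tfrac{(1-\rho_o)m}{2-m}$ vanishes) immediately recovers the simplified $G^+_{\mu\nu}, G^-_{\mu\nu}$ formulas of Theorem~\ref{theorem_linpert_decompose_main}.
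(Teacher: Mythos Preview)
Your proposal is correct and follows essentially the same route as the paper: linearize $\bar{\epsilon}_f = \llv X \rrv_F^2$ via $2\tr[X_0^T\delta X]$, apply the first-order inverse identity to obtain exactly the paper's perturbation term $2\tr\bigl[\bar{M}(I+\bar{M})^{-T}C^{out}(I+\bar{M})^{-1}(\delta M^T - \delta M(I+\bar{M})^{-1}\bar{M}^T)\bigr]$, then insert the lemma's formula for $(I+\bar{M})^{-1}$ and evaluate geometric sums in $(1-m)$, finally sorting by whether exponents depend on $i$, $j$, $i+j$, or $j-i$. The only cosmetic difference is that you propose to split $C^{out}$ into its rank-one and identity parts up front, whereas the paper packages $C^{out}(I+\bar{M})^{-1}$ into an auxiliary entry $u_{ij}$ (and $\bar{M}(I+\bar{M})^{-T}$ into $v_{ji}$) before summing; both organizations lead to the same computation.
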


Matrix $G$ specifies the relative contribution of each task-to-task similarity to the final error. 
Notably, $\bar{\epsilon}_f [m, \rho_o]$ term is permutation invariant by construction. Thus, task-order dependence stems from the $\delta M$-dependent terms. 

Inserting $\rho_o = 1$ into Eqs. \ref{eq_def_go_gpm}, you get Theorem \ref{theorem_linpert_decompose_main} in the main text.

\subsection{Proof of Theorem \ref{theorem_linpert_decompose}}
The inverse of $(I + \bar{M} + \delta M)$ is rewritten as
\begin{align}
(I + \bar{M} + \delta M)^{-1}
= (I + \bar{M})^{-1} - (I + \bar{M})^{-1} \delta M (I + \bar{M})^{-1} + \Order (\delta M^2).
\end{align}
Thus, up to the leading order with respect to $\delta M$, we have
\begin{align}
  \bar{\epsilon}_f 
  &= \llv \left( \rho_o \bm{1}\bm{1}^T + [1-\rho_o] I \right)^{1/2} \left( I + \bar{M} + \delta M \right)^{-1} (\bar{M} + \delta M)^T \rrv_F^2
  \nonumber \\
  &= \llv \left( \rho_o \bm{1}\bm{1}^T + [1-\rho_o] I  \right)^{1/2} (I + \bar{M})^{-1} \bar{M}^T \rrv_F^2 
  \nonumber \\
  &\quad + 2 \tr \left[ \bar{M} (I+\bar{M})^{-T} \left( C^{out} \right)^{1/2} (I + \bar{M})^{-1} \left( \delta M^T - \delta M (I + \bar{M})^{-1} \bar{M}^T \right) \right] + \Order(\delta M^2).
\end{align}
The first term corresponds to $\bar{\epsilon}_f [\bar{M}, C^{out}]$, thus it is enough to show that the coefficients of $\delta M$ is written as Eqs. \ref{eq_def_Gij_full} and \ref{eq_def_go_gpm}. 

$\bar{M} (I+\bar{M})^{-T}$ term is rewritten as
\begin{align}
  [\bar{M} (I + \bar{M})^{-T}]_{ij}
  &= \sum_{k=1}^P m [k>i]_+ \left( \delta_{jk} - m [k>j]_+ (1-m)^{k-j-1} \right) 
  \nonumber\\
  &= m [j>i]_+ - m \left( (1-m)^{k_{ij} - j - 1} - (1-m)^{P-j} \right)
  \nonumber \\
  &= [j > i]_+ m \left( 1 - [1 - (1-m)^{P-j}] \right)
  - m [j \leq i]_+ \left( (1-m)^{i-j} - (1-m)^{P-j} \right)
  \nonumber \\
  &= m (1-m)^{P-j} - m[j \leq i]_+ (1-m)^{i-j}.  
\end{align}
In the second line, we defined $k_{ji}$ by $k_{ji} \equiv \max (i+1, j+1)$.
For the ease of notation, let us denote the last term in the equation above as
\begin{align}
  v_{ji} \equiv m (1-m)^{P-j} - m[j \leq i]_+ (1-m)^{i-j}. 
\end{align}

Next, $\left( \rho_o \bm{1}\bm{1}^T + [I - \rho_o] I \right) (I + \bar{M})^{-1}$ term becomes 
\begin{align}
&\left[ \left( \rho_o \bm{1}\bm{1}^T + [I - \rho_o] I \right) (I + \bar{M})^{-1} \right]_{ij}
\nonumber \\
&= \sum_{k} \left( \rho_o + (1-\rho_o) \delta_{ik} \right)
\left( \delta_{kj} - m [j>k]_+ (1-m)^{j-k-1} \right)
\nonumber \\
&= \rho_o \left( 1 - \sum_k m [j > k]_+ (1-m)^{j-k-1} \right)
+ (1 - \rho_o) \left( \delta_{ij} - m [j>i]_+ (1-m)^{j-i-1} \right)
\nonumber \\
&= \rho_o (1 - m)^{j-1} + (1 - \rho_o) \delta_{ij} 
- (1 - \rho_o) m [j>i]_+ (1-m)^{j-i-1}
\end{align}
In the last line, we used 
$\sum_k m [j>k]_+ (1-m)^{j-k-1} = 1 - (1-m)^{j-1}$.
As before, let us denote the coefficient by 
\begin{align}
  u_{ij} \equiv \rho_o (1-m)^{j-1} - (1 - \rho_o) m [j>i]_+ (1-m)^{j-i-1}.
\end{align}

Then, the first-order term with respect to $\delta M$ follows
\begin{align}
  &\tr \left[ \bar{M} (I+\bar{M})^{-T} \left( \rho_o \bm{1}\bm{1}^T + (1 - \rho_o) I \right) (I + \bar{M})^{-1} \left( \delta M^T - \delta M (I + \bar{M})^{-1} \bar{M}^T \right) \right]
  \nonumber \\
  &= \sum_{ijk} \left[ \bar{M} (I+\bar{M})^{-T} \right]_{ij}
  \left[ \left( \rho_o \bm{1}\bm{1}^T + [I - \rho_o] I \right) (I + \bar{M})^{-1} \right]_{jk} 
  \left( \delta M_{ik} - \sum_l \delta M_{kl} \left[ \bar{M} (I + \bar{M})^{-T} \right]_{il} \right)
  \nonumber \\
  &= \sum_{ijk} v_{ji} \left( (1-\rho_o)\delta_{jk} + u_{jk} \right) \left( \delta M_{ik} - \sum_l \delta M_{kl} v_{li} \right)
  \nonumber \\
  &= \sum_{kl} \delta M_{kl} 
  \sum_{ij} v_{ji} \left( 
  \delta_{ik} \left[ (1-\rho_o) \delta_{jl} + u_{jl} \right] - v_{li} \left[ (1-\rho_o) \delta_{jk} + u_{jk} \right] \right)
  \nonumber \\
  &= \sum_{kl} \delta M_{kl} G_{kl},
\end{align}
where the coefficient of $(k,l)$-th element is defined by 
\begin{align}
  G_{kl} \equiv \sum_{ij} v_{ji} \left( 
  \delta_{ik} \left[ (1-\rho_o) \delta_{jl} + u_{jl} \right] - v_{li} \left[ (1-\rho_o) \delta_{jk} + u_{jk} \right] \right).
\end{align}

$G_{kl}$ is decomposed into
\begin{align}
  G_{kl}
  = (1- \rho_o) v_{lk} + \sum_j v_{jk} u_{jl}
  - (1-\rho_o) \sum_i v_{ki} v_{li} - \sum_{ij} v_{ji} v_{li} u_{jk}. 
\end{align}
The first term $(1-\rho_o) v_{lk}$ is rewritten as
\begin{align}
  (1-\rho_o) v_{lk}
  &= (1-\rho_o) m (1-m)^{P-l} - m [l \leq k]_+ (1-m)^{k-l}
  \nonumber \\
  &= (1-\rho_o) m (1-m)^{P-l}.
\end{align}
Here, we dropped the second term, because $\delta M_{kl} = 0$ when $l \leq k$. 

Regarding the second term, summation over $j$ is evaluated as 
\begin{align}
  \sum_{j=1}^P v_{ji} u_{jk}
  &= \sum_j 
  \left( m (1-m)^{P-j} - m [j\leq i]_+ (1-m)^{i-j} \right)
  \left( \rho_o (1-m)^{k-1} - (1-\rho_o) m [k>j]_+ (1-m)^{k-j-1} \right)
  \nonumber \\
  &= \rho_o (1-m)^{k-1} 
  \left( m \sum_{j=1}^P (1-m)^{P-j} - m \sum_{j=1}^i (1-m)^{i-j} \right)
  \nonumber \\
  &\quad - (1 - \rho_o) m^2 \left( \sum_{j=1}^{k-1} (1-m)^{(P-j)+(k-j-1)} - \sum_{j=1}^{j_{ik}} (1-m)^{(i-j)+(k-j-1)} \right)
  \nonumber \\
  &= \rho_o (1-m)^{k-1} \left( (1-m)^i - (1-m)^P \right)
  \nonumber \\
  &\quad -  \frac{(1 - \rho_o) m}{2-m} \left( 
  \left[ (1-m)^{P-(k-1)} - (1-m)^{P+(k-1)} \right]
  - \left[ (1-m)^{i+k-1 - 2j_{ik}} - (1-m)^{i+k-1} \right]
  \right)
  \nonumber \\
  &= \left( \rho_o -  \frac{(1-\rho_o) m}{2-m} \right) \left( (1-m)^{i+k-1} - (1-m)^{P+k-1} \right)
  + \frac{(1-\rho_o)m}{2-m} \left( (1-m)^{i+k-1-2j_{ik}} - (1-m)^{P-(k-1)} \right).
\end{align}
In the third line, we defined $j_{ik}$ as $j_{ik} \equiv \min (k-1, i)$.
Thus, the second, $\sum_j v_{jk} u_{jl}$, term becomes
\begin{align}
  \sum_j v_{jk} u_{jl}
  &= - \left( \left[ \rho_o - \frac{(1-\rho_o)m}{2-m} \right] (1-m)^{P+l-1} + \frac{(1-\rho_o)m}{2-m} (1-m)^{P-l+1} \right)
  \nonumber \\
  &\quad + \left[ \rho_o - \frac{(1-\rho_o)m}{2-m} \right] (1-m)^{k+l-1} 
  + \frac{(1-\rho_o)m}{2-m} (1-m)^{l-k-1}.
\end{align}
The third term is, from a similar calculation, rewritten as
\begin{align}
  &\sum_i v_{li} v_{ki}
  \nonumber \\
  &= \sum_i
  \left( m(1-m)^{P-l} - m [l \leq i]_+ (1-m)^{i-l} \right) 
  \left( m (1-m)^{P-k} - m[k \leq i]_+ (1-m)^{i-k} \right)
  \nonumber \\
  &= P m^2 (1-m)^{2P-(k+l)}
  - m^2 \left( (1-m)^{P-k} \sum_{i=l}^P (1-m)^{i-l} + (1-m)^{P-l} \sum_{i=k}^P (1-m)^{i-k} \right)
  + m^2 \sum_{i=l}^P (1-m)^{2i-(k+l)}
  \nonumber \\
  &= \left( P m^2 + 2 m(1-m) - \frac{m (1-m)^2}{2-m} \right) (1-m)^{2P-(k+l)}
  - m \left( (1-m)^{P-k} + (1-m)^{P-l} \right)
  + \frac{m}{2-m} (1-m)^{l-k}. 
\end{align}

The last term is a little more complicated, so let us divide it into two terms:
\begin{align}
  \sum_i v_{li} \sum_j v_{ji} u_{jk}
  = \left( \rho_o - \frac{(1-\rho_o)m}{2-m} \right) T_1 + \frac{(1-\rho_o)m}{2-m} T_2,
\end{align}
where
\begin{subequations}
\begin{align}
  T_1 &= 
  m (1-m)^{P-l} \sum_{i=1}^P \left( (1-m)^{i+k-1} - (1-m)^{P+k-1} \right)
  - m \sum_{i=l}^P (1-m)^{i-l} \left( (1-m)^{i+k-1} - (1-m)^{P+k-1} \right),
  \nonumber \\
  T_2 &= 
  m (1-m)^{P-l} \sum_{i=1}^P \left( (1-m)^{i+k-1-2j_{ik}} 
  - (1-m)^{P-(k-1)} \right)
   - m\sum_{i=l}^P (1-m)^{i-l}\left( (1-m)^{i-(k-1)} - (1-m)^{P-(k-1)} \right).
\end{align}
\end{subequations}
Taking summation over index $i$, $T_1$ and $T_2$ are rewritten as
\begin{align}
  T_1 &= 
  m (1-m)^{P-l} \left( \frac{(1-m)^k}{m} \left[ 1 - (1-m)^P \right] - P (1-m)^{P+k-1} \right)
  \nonumber \\
  &\quad - \left( 
  \frac{1}{2-m} \left[ (1-m)^{l+k-1} - (1-m)^{2P+1+(k-l)} \right] 
  - (1-m)^{P+k-1} \left[ 1 - (1-m)^{P+1-l} \right]
  \right)
  \nonumber \\
  &= \left( 1 - (1-m)^P \left[ 2 + \frac{mP}{1-m} - \frac{1-m}{2-m} \right] \right) (1-m)^{P-(l-k)}
  - \frac{1}{2-m} (1-m)^{l+k-1} + (1-m)^{P+k-1}.
\end{align}
\begin{align}
  T_2 &= 
  (1-m)^{P-l} \left( 
  [1 - (1-m)^{k-1}] + [(1-m) - (1-m)^{P+2-k}] - mP(1-m)^{P-(k-1)}
  \right)
  \nonumber \\
  &\quad - \left( \frac{1}{2-m} [(1-m)^{l-k+1} - (1-m)^{2P+3 - (l+k)}]
  - [1 - (1-m)^{P+1-l}] (1-m)^{P-(k-1)}\right)
  \nonumber \\
  &= - \frac{1-m}{2-m} (1-m)^{l-k}
  - \frac{1}{1-m} (1-m)^{P-(l-k)}
  + \left( (2-m) (1-m)^{P-l} + (1-m)^{P-k+1} \right)
  \nonumber \\
  &\quad - (1-m)^{2P-(k+l)} \left( mP(1-m) + 2 (1-m)^2 - \frac{(1-m)^3}{2-m} \right).
\end{align}

The results above show that $G_{kl}$ is decomposed into four components:
\begin{align}
  G_{kl} = g_- (l-k) + g_+ (l+k) + g_L (k) + g_R (l). 
\end{align}
Summing over the terms that only depends on $k$, we have
\begin{align}
  g_L (k) &= 
  - (1- \rho_o) \left( -m (1-m)^{P-k} \right)
  - \left( \rho_o - \tfrac{(1-\rho_o)m}{2-m} \right) (1-m)^{P+k-1}
  - \tfrac{(1-\rho_o)m}{2-m} (1-m)^{P-k+1}
  \nonumber \\
  &= \tfrac{(1-\rho_o)m}{2-m} (1-m)^{P-k}
  - \left( \rho_o - \tfrac{(1-\rho_o)m}{2-m} \right) (1-m)^{P+k-1}
\end{align}
Similarly, the terms that only depend on $l$ are summed up to:
\begin{align}
  g_R (l) &= 
  (1 - \rho_o) m (1-m)^{P-l}
  - \left( \left[ \rho_o - \tfrac{(1-\rho_o)m}{2-m} \right] (1-m)^{P+l-1}
  + \tfrac{(1-\rho_o)m}{2-m} (1-m)^{P-l+1} \right)
  \nonumber \\
  &\quad- (1-\rho_o) \left( -m (1-m)^{P-l} \right)
  - \tfrac{(1-\rho_o)m}{2-m} (2-m) (1-m)^{P-l}
  \nonumber \\
  &= \tfrac{(1-\rho_o)m}{2-m} (1-m)^{P-l}
  - \left( \rho_o - \tfrac{(1-\rho_o)m}{2-m} \right) (1-m)^{P+l-1}.
\end{align}
Therefore, $g_L$ and $g_R$ have the same form. 
We denote this function as $g_o$ below. 

$g_+ (l+k)$ term has a slightly more complicated expression:
\begin{align}
  g_+ (l+k) &=
  \left( \rho_o - \tfrac{(1-\rho_o)m}{2-m} \right) (1-m)^{k+l-1}
  - (1-\rho_o)m \left( Pm + 2(1-m) - \tfrac{(1-m)^2}{2-m} \right) (1-m)^{2P-(k+l)}
  \nonumber \\
  &\quad + \left( \rho_o - \tfrac{(1-\rho_o)m}{2-m} \right) \tfrac{1}{2-m} (1-m)^{l+k-1}
  + \tfrac{(1-\rho_o)m}{2-m} (1-m) \left( Pm + 2(1-m) - \tfrac{(1-m)^2}{2-m} \right) (1-m)^{2P - (k+l)}
  \nonumber \\
  &= \left( \rho_o - \tfrac{(1-\rho_o)m}{2-m} \right) \tfrac{3-m}{2-m} (1-m)^{l+k-1} 
  - \tfrac{(1-\rho_o)m}{2-m} \left( Pm + 2(1-m) - \tfrac{(1-m)^2}{2-m} \right) (1-m)^{2P - (k+l)}.
\end{align}
Lastly, $g_- (l-k)$ term becomes
\begin{align}
  &g_- (l-k) \nonumber \\
  &= \tfrac{(1-\rho_o)m}{2-m} (1-m)^{l-k-1}
  - \tfrac{(1-\rho_o)m}{2-m} (1-m)^{l-k}
  - \left( \rho_o - \tfrac{(1-\rho_o)m}{2-m} \right) 
  \left( 1 - (1-m)^P \left( 2 + \tfrac{mP}{1-m} - \tfrac{1-m}{2-m} \right) \right) (1-m)^{P-(l-k)}
  \nonumber \\
  &\quad + \tfrac{(1-\rho_o)m}{2-m} \left( \tfrac{1-m}{2-m} (1-m)^{l-k} + \tfrac{1}{1-m} (1-m)^{P-(l-k)} \right)
  \nonumber \\
  &= \tfrac{(1-\rho_o)m}{2-m} \tfrac{1}{2-m} (1-m)^{l-k-1}
  - \left[ \left( 1 - (1-m)^P \left( \tfrac{mP}{1-m} + \tfrac{3-m}{2-m} \right) \right) - \tfrac{(1-\rho_o)m}{2-m} \tfrac{1}{1-m} \right] (1-m)^{P-(l-k)}.
\end{align}
We thus obtain Eq. \ref{eq_def_go_gpm}. 
If we set $\rho_o = 1$, we recover Theorem \ref{theorem_linpert_decompose_main}. 
In Theorem \ref{theorem_linpert_decompose_main}, $G^+$ is defined as
$G^+_{ij} = g_+ (i+j) + g_L(i) + g_R(j)$.

\subsection{The Impact of Task Typicality}
We can further analyze the impact of task typicality on the task order discuss in the main text.
Let us define $\bar{g}$ by
\begin{align}
  \bar{g} 
  \equiv \frac{1}{P} \sum_{\mu=1}^P (1-m)^{\mu}
  = \frac{1}{mP} \left( (1-m) - (1-m)^{P+1} \right),
\end{align}
then $G^+_{\mu\nu}$ is rewritten as
\begin{align}
  G^+_{\mu\nu}
  &= \left[ \tfrac{(3-m) \bar{g} }{(2-m) (1-m)} - (1-m)^{P-1} \right] \left[ (1-m)^{\mu} + (1-m)^{\nu} \right]
  \nonumber \\
  &\quad + \tfrac{3-m}{ (2-m) (1-m) } \left( (1-m)^{\mu} - \bar{g} \right)  \left( (1-m)^{\nu} - \bar{g} \right)
  - \tfrac{ (3-m) \bar{g}^2 }{ (2-m) (1-m) }.
\end{align}
Thus, the corresponding error term $\delta \bar{\epsilon}^+_f$ becomes
\begin{align}
  \delta \bar{\epsilon}^+_f
  &= \sum_{\mu=1}^P \sum_{\nu=\mu+1}^P G^+_{\mu\nu} \delta M_{\mu\nu}
  \nonumber \\
  &= P \left[ \tfrac{(3-m) \bar{g} }{(2-m) (1-m)} - (1-m)^{P-1} \right] \sum_{\mu=1}^P (1-m)^{\mu} \delta t_{\mu} 
  \nonumber \\
  &\quad + \tfrac{3-m}{ (2-m) (1-m) } \sum_{\mu=1}^P \sum_{\nu=\mu+1}^P \left( (1-m)^{\mu} - \bar{g} \right)  \left( (1-m)^{\nu} - \bar{g} \right) \delta M_{\mu\nu}
  + \text{const.}
\end{align}
If $\tfrac{(3-m) \bar{g} }{(2-m) (1-m)} > (1-m)^{P-1}$, from the rearrangement inequality, the first term is minimized under $\delta t_1 \leq \delta t_2 \leq ... \leq \delta t_P$ ordering. However, the second term may not be minimized under this task order.  

\begin{figure*}[tb]
\begin{center}
\centerline{\includegraphics[width=1.0\linewidth]{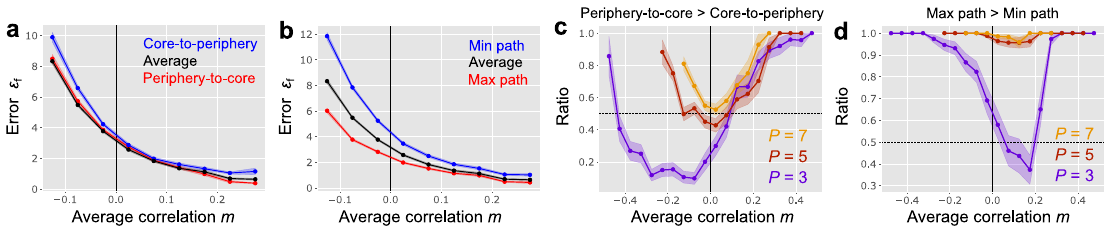}}
\vskip -0.1in
\caption{
\textbf{a, b)} The same as Fig. \ref{lin_pert}e and g, but without normalization. The average correlation dominates the error $\epsilon_f$, but the order dependence is observed robustly.   
\textbf{c)} The ratio of networks where the periphery-to-core order outperforms core-to-periphery order. 
As before, we generated 1000 samples of $C^{in}$ matrices randomly while fixing $C^{out}$ to be all one, then binned $C^{in}$ based on the average off-diagonal correlation. For each average correlation value $m$, we then calculated the ratio of networks in which the periphery-to-core order achieved smaller error than the core-to-periphery order. 
\textbf{d)} The same as panel c, but the ratio of random seeds where the max-path orders achieved the better performance than the min-path orders. 
}
\label{lin_pert_supp}
\end{center}
\vskip -0.2in
\end{figure*}

\section{Implementation Details} \label{sec_implementation_details}
Source codes for all the numerical experiments are available at \url{https://github.com/ziyan-li-code/optimal-learn-order}.

\subsection{Linear Teacher-student Model with Latent Variables}
In the simulations depicted in Figs. \ref{lin_theory_simul}b, \ref{lin_pert}eg, and \ref{lin_graph}ef, we set the latent vector size $N_s = 30$, input layer size $N_x = 3000$, and the output layer size $N_y = 10$. 
We initialized the weight matrix $W$ as the zero matrix, and then updated the weight using gradient descent (Eq. \ref{eq_lst_gd_discrete}) with learning rate $\eta = 0.001$ for 100 iterations per task.  
In Fig. \ref{lin_theory_simul}b, the input correlation matrix $C^{in} \in \Real^{P \times P}$ was generated randomly in the following manner. First, we generated a strictly upper-triangular matrix $C^{in,U}$ by sampling each element independently from a continuous uniform distribution between 0 and 1, $\mathcal{U}_{[0,1]}$. We then generated the full matrix $C^{in}$ by $C^{in} = C^{in,U} + (C^{in,U})^T + I$. If the resultant $C^{in}$ is a positive semi-definite matrix, we accepted the matrix, otherwise, we generated $C^{in}$ in the same manner, until we obtain a positive semi-definite matrix. 
Here, we limited the correlation to be positive mainly because continual learning is typically impractical when there exists a large negative correlation between tasks. 
We generated $C^{out}$ using the same method. 
In Fig. \ref{lin_theory_simul}c, we estimated the final error $\bar{\epsilon}_f$ using Eq. \ref{eq_epsilon_f_analytical} for each triplet $(\rho_{AB}, \rho_{BC}, \rho_{CA})$. We calculated the error for all six task orders and plotted the order that yielded the minimum error. 

In Figs. \ref{lin_pert}e and g, we generated input correlation matrix $C^{in}$ using the same method with Fig. \ref{lin_theory_simul}b, but we instead sampled the elements from a uniform distribution between -1 and 1, $\mathcal{U}_{[-1,1]}$. The average correlation $m$ was defined by the average of the off-diagonal components of $C^{in}$. The output correlation matrix $C^{out}$ was set to be the all one matrix (i.e. $C^{out}_{\mu\nu} = 1$ for all $(\mu,\nu)$ pairs) which corresponds to the $\rho_o = 1$ scenario.
We estimated the error under each task order for 1000 randomly generated input correlation matrices and binned the performance by the average correlation. The average performance (black lines in Figs. \ref{lin_pert}e and g) were estimated by taking the average over randomly sampled 100 task orders. The error bars, representing the standard error of mean, are larger for larger average correlation because we didn't generated many $C^{in}$ with a large average correlation under our random generation method. 
Because there are two task sequence that provides the max-path due to symmetry (e.g. \textit{A\textrightarrow C\textrightarrow E\textrightarrow B\textrightarrow D} and \textit{D\textrightarrow B\textrightarrow E\textrightarrow C\textrightarrow A} in Fig. \ref{lin_pert}c), we defined the error of the max-path rule as the average over these two task orders.

\subsection{Generation of Input Similarity Matrices Having Simple Graph Structures}
In Fig. \ref{lin_graph}, we introduced simple graph structures to the task similarity matrices.   
To this end, we first generated an unweighted bidirectional adjacency matrix $A_{dj} \in \{0, 1\}^{P \times P}$ given a graph structure, then calculated distance between nodes on the graph specified by $A_{dj}$, which we denote as $D$. 
From this distance matrix $D$, we generated the input similarity matrix $C^{in}$ by $C^{in}_{ij} = a^{D_{ij}}$ for each task pairs $(i,j)$.  
Here, $a$ is the constant that controls overall task similarity. $a \approx 1$ means that inputs for all tasks are highly correlated, whereas $a \approx 0$ means that they are mostly independent. 

For instance, in the case of chain graph, the adjacency matrix is given by $A_{dj,ij} = 1$ if $j = i \pm 1$ else $A_{dj,ij} = 0$. Thus, the distance between nodes $D$ follows $D_{ij} = |i-j|$ and the input correlation matrix becomes $C^{in}_{ij} = a^{|i-j|}$. 
In the ring graph, the distance between node is instead given by $D_{ij} = \min \{ |i-j|, P-|i-j| \}$.
For the tree graph, we used a tree where each non-leaf node has exactly two children nodes. 
The same structure was assumed for the leaves graph, except that we only used leaf nodes for constructing the task similarity matrix.

\begin{figure*}[tb]
\begin{center}
\centerline{\includegraphics[width=1.0\linewidth]{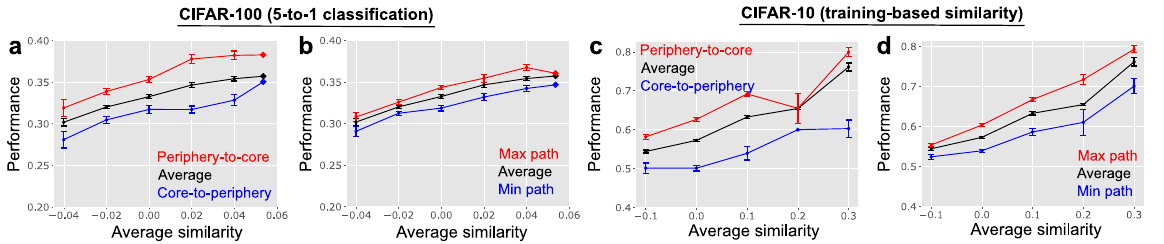}}
\vskip -0.1in
\caption{Task order preference in continuous image classification tasks.
\textbf{a,b)} Average classification performance after continual learning of CIFAR-100 where each task consists of 5 label classifications. 
Here, we randomly picked 25 labels from CIFAR-100 dataset and generated 5 tasks each requiring classification of 5 labels.  
\textbf{c,d)} Average classification performance after continual learning on CIFAR-10. Task similarity was estimated by evaluating zero-shot generalization performance from task $A$ to $B$, using the training data of $B$ instead of the test data (see Appendix \ref{sec_implementation_details}.4 for details).
}
\label{order_nonlin_supp}
\end{center}
\vskip -0.2in
\end{figure*}

\begin{figure*}[tb]
\begin{center}
\centerline{\includegraphics[width=0.5\linewidth]{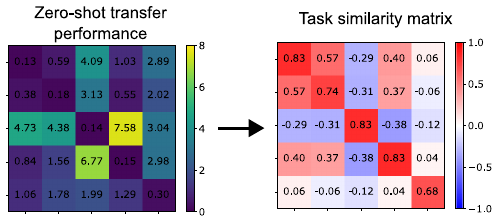}}
\vskip -0.1in
\caption{Schematic of the task similarity estimation. From the error in zero-shot transfer $\epsilon_{\mu} [W_{\nu}]$ (left), we estimated task similarity $\rho_{\mu\nu}$ (right).}
\label{sim_mat}
\end{center}
\vskip -0.2in
\end{figure*}
\begin{figure*}[t]
\begin{center}
\centerline{\includegraphics[width=1.0\linewidth]{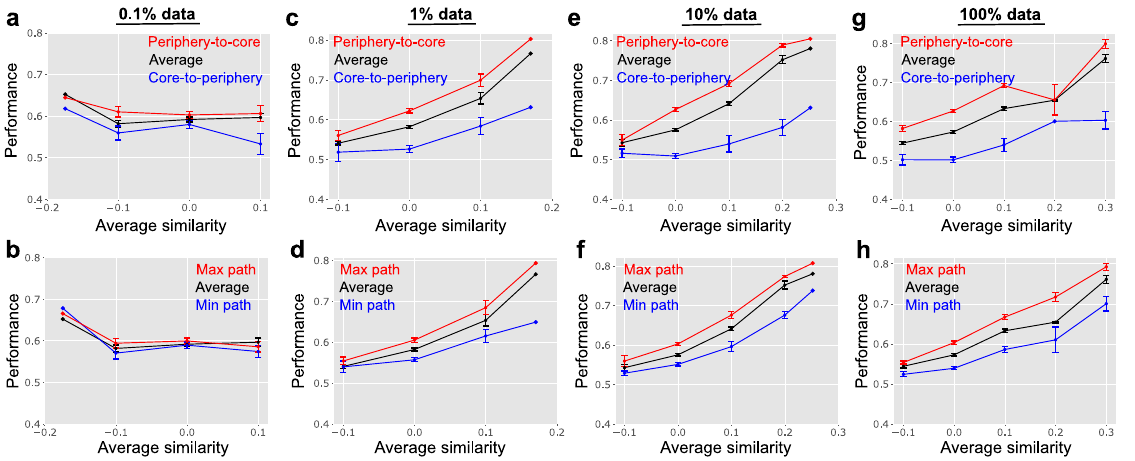}}
\vskip -0.1in
\caption{Task order preference estimated from various fraction of training data under CIFAR-10 task. 
For instance, in $0.1\%$ data results depicted in panels a and b, we first used $0.1\%$ of training data ($\sim 10$ images) for training the network with one task and used $0.1\%$ of training data from another task to evaluate the zero-shot transfer performance. We then estimated the task similarity as before.  
Panels g and h are the same with panels c and d in Fig. \ref{order_nonlin_supp}. 
We used the same set of parameters with the results depicted in the main figure (see Appendix \ref{sec_implementation_details} for details). 
}
\label{sparse_cifar10}
\end{center}
\vskip -0.2in
\end{figure*}

\subsection{Convolutional and Multi-layered Non-linear Neural Networks for Image Classification}
We used convolutional neural networks (CNNs) for numerical experiments with the CIFAR-10 and CIFAR-100 datasets.
The network consisted of two convolutional layers and one dense layer, followed by an output layer.
The first convolutional layer had 32 filters with $3 \times 3$ kernels. The output was passed through a Rectified Linear Unit (ReLU) activation function and then downsampled using average pooling with a window size of $2 \times 2$ and a stride of 2. The second convolutional layer was similar to the first, except that we used 64 filters.
The dense layer following the two convolutional layers had 256 neurons, with ReLU as the activation function. For classification, we used a softmax activation function in the last layer. The weights of both convolutional and dense layers were initialized with LeCun normal initializers. 

For the Fashion-MNIST dataset, we used multi-layer perceptrons (MLPs) to evaluate the robustness of our findings against the neural architecture.
The MLP model had two hidden layers with 128 and 64 neurons, respectively. We used ReLU as the activation function for the hidden layers and softmax for the output layer.

In both CNN and MLP models, we studied binary classification with two output neurons, except in Figure \ref{order_nonlin_supp}a and b, where we considered a classification of 5 labels with five output neurons. 
All tasks were implemented as single-head continual learning where the output nodes are shared across tasks. The performance was evaluated by the average classification accuracy on the test datasets for all the tasks at the end of the entire training. 

The networks were trained by minimizing the cross-entropy loss using the Adam optimizer with a learning rate of $10^{-3}$ for five epochs per task. We set the batch size to 4 due to GPU memory constraints. The models were implemented using Flax \citep{flax2020github}, a JAX neural network library, and were trained on NVIDIA Tesla V100 GPUs 

In both Figures \ref{order_nonlin} and \ref{order_nonlin_supp}, we generated 100 task sets by randomly dividing 10 labels into a set of five binary classification tasks. In the case of CIFAR-100, we initially sampled 10 labels randomly, then partitioned them into five binary classification. 
We binned the task sets by the average similarity estimated from Eq. \ref{eq_def_rhoAB_nonlin}, then plotted the mean performance and the standard error of mean for each bin. 
The black lines representing the average performance of random task order were estimated by taking the mean of the classification performance under 30 random task orders for each task set. 
Because there are two task orders that provides the max-path by construction, we define the performance of the max-path rule as the mean of performance under these two task orders. 

\begin{figure*}[t]
\begin{center}
\centerline{\includegraphics[width=1.0\linewidth]{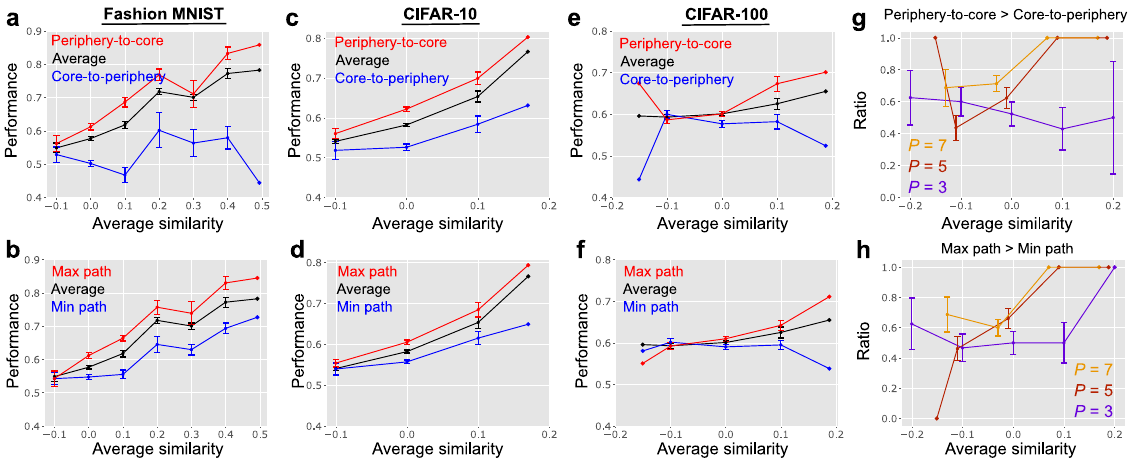}}
\vskip -0.1in
\caption{Task order preference in continuous image classification tasks where the task similarity was estimated from 1\% of training data, as opposed to Fig. \ref{order_nonlin} where 100\% of data was used.
\textbf{a–f)} Continual learning performance, defined as the average test accuracy across all the tasks after learning, under various task orders.
\textbf{g, h)} The ratio of task sets where the periphery-to-core rule outperforms the core-to-periphery rule (g), and where the max-path rule outperforms the min-path rule (h), under CIFAR-100 with different numbers of tasks ($P=3,5,7$).
Panels c and d were replicated from Fig. \ref{sparse_cifar10} for completeness. 
Note that task-order effects were smaller in CIFAR-100 because the number of training images per label is fewer in CIFAR-100 than in CIFAR-10 and Fashion-MNIST. 
}
\label{sparse_order_nonlin}
\end{center}
\vskip -0.2in
\end{figure*}

\subsection{Estimation of Task Similarity}
In the main text, we inferred similarity between two tasks \textit{A} and \textit{B} using zero-shot transfer performance between the two tasks. 
Previous work on linear model indicates that, if the output similarity is one, the pairwise transfer performance $\Delta \epsilon_{TF} [\nu \to \mu] \equiv \epsilon_{\mu} [W_{\nu}] - \epsilon_{\mu} [W_o]$ is written as \citep{hiratani2024disentangling}
\begin{align}
  \Delta \epsilon_{TF} [\nu \to \mu] = \rho^{in}_{\mu\nu} (2 - \rho^{in}_{\mu\nu}), 
\end{align}
indicating that the input similarity between two tasks can be inferred as
\begin{align}
  \rho^{in}_{\mu\nu}  = 1 - \sqrt{1 - \Delta \epsilon_{TF}  [\nu \to \mu]}.
\end{align}
Motivated by this relationship, we defined similarity between tasks in general nonlinear networks by Eq. \ref{eq_def_rhoAB_nonlin}. 
A similar approach was implemented in \citep{lad2009toward}. 
Notably, this method only requires inputs/outputs of the trained network, and thus applicable to situation where the model details are inaccessible (e.g., human and animal brains, closed-LLM). 

We implemented the evaluation the zero-shot transfer performance from task \textit{A} to \textit{B} as follows: First, we trained a network on task \textit{A} for 5 epochs from a random initialization using the Adam optimizer on the cross-entropy loss with learning rate $10^{-3}$ as above. We then measured the cross-entropy loss on the test dataset of task \textit{B}, $\epsilon_B [W_A]$, where $W_A$ represents the weight after 5 epochs of training on task \textit{A}.  
To normalize the accuracy, we divided the obtained loss by the loss on task \textit{B} under a label shuffling, $\epsilon_{B,sf} [W_A]$. 
The resultant value $\frac{\epsilon_B[W_A]}{\epsilon_{B,sf} [W_A]}$ characterizes how well the network transfer to task \textit{B} compared to a random task with the same input statistics. 
Since evaluating similarity using the transfer performance on the test data may potentially introduce bias, in Fig. \ref{order_nonlin_supp}c and d, we estimated the transfer performance $\Delta \epsilon_B [W_A]$ using the training dataset for task $B$.
Even in this setting, we found results nearly identical to those in Fig. \ref{order_nonlin}c and d, confirming the robustness of our findings with respect to the details of the similarity evaluation method.

In the task similarity and order estimation from sparse data shown in Figs. \ref{sparse_cifar10} and \ref{sparse_order_nonlin}, we estimated task similarity in the same manner as described above, but using subsampled training data. For example, in the 1\% data scenario, we used 1\% of the training data for task \textit{A} to calculate the weights after learning task \textit{A}, denoted as $W_A$. As before, we trained the network using the Adam optimizer on the cross-entropy loss for 5 epochs from random initialization except that we only used 1\% of training data at each epoch. We then estimated the transfer performance to task \textit{B}, $\epsilon_B [W_A]$, using 1\% of the training data for task \textit{B}. The task similarity between tasks \textit{A} and \textit{B} was then computed using Eq. \ref{eq_def_rhoAB_nonlin}, as in the full data setting.


\end{document}